\documentclass{article} 

\usepackage{iclr2027_conference,times}

\usepackage{amsmath,amsfonts,bm}

\def\eqref#1{equation~\ref{#1}}

\def\1{\bm{1}}

\DeclareMathAlphabet{\mathsfit}{\encodingdefault}{\sfdefault}{m}{sl}
\SetMathAlphabet{\mathsfit}{bold}{\encodingdefault}{\sfdefault}{bx}{n}

\usepackage{hyperref}
\usepackage{url}

\usepackage{amsmath,amssymb,amsthm,mathtools,bm}
\usepackage{graphicx,booktabs,tabularx,multirow,xcolor}
\usepackage{hyperref,url}
\newcommand{\method}{Tangent-SBM}

\newtheorem{theorem}{Theorem}
\newtheorem{proposition}[theorem]{Proposition}

\title{Tangent Schr\"odinger Bridge Matching:
Learning Stochastic Transport with Mechanistic Sensitivities}

\author{
Jowaria Khan \\
University of Michigan \\
\texttt{jowaria@umich.edu}
\And
Elizabeth Bondi-Kelly \\
University of Michigan \\
\texttt{ecbk@umich.edu}
}

\iclrfinalcopy 
\begin{document}

\maketitle

\begin{abstract}
Predicting how stochastic systems respond to changes
in viscosity, reaction rates, or external forces
requires costly simulations, motivating reusable
learned models.
Yet matching observed outcome distributions does
not ensure accurate intervention responses.
We introduce \textbf{Tangent Schr\"odinger Bridge Matching
(Tangent-SBM)}, which learns stochastic transports
from endpoint observations and mechanistic sensitivities.
It propagates parameter derivatives alongside
trajectories and supervises them against supplied targets.
For average-response targets, single-rollout squared
error also penalizes response variability; our objective
uses two independent rollouts to match the mean
without this additional penalty.
We establish conditions under which sensitivity
accuracy bounds finite-change prediction error
and decision regret.
Across Gaussian, stochastic double-well, PDEBench
reaction--diffusion, and stochastic Navier--Stokes
systems, Tangent-SBM improves sensitivity and
finite-change prediction over matched conditional-bridge
baselines while maintaining comparable endpoint
and distributional accuracy.
Controls examine target correctness, response
objectives, and simulator-budget allocation.
To test decision usefulness, we evaluate calibrated
viscosity selection in Navier--Stokes:
Tangent-SBM reduces tracking error relative to
taking no action on every evaluated task.
\end{abstract}

\section{Introduction}
\label{sec:introduction}

Scientific simulators use mathematical models to predict
how a system evolves; for example, how a fluid flows
at a given viscosity.
An intervention changes a parameter or operating condition.
In stochastic systems, random forcing or fluctuations
produce different outcomes even with the same inputs.
Comparing interventions can therefore require many
numerical solves for each candidate and initial condition,
making repeated optimization or decision-making costly.
A reusable surrogate shifts computation to training
and can then evaluate new candidates more cheaply
\citep{oleary2024dino,luo2025optimization}.
For stochastic systems, such a surrogate should represent the distribution of possible outcomes, not only their mean.


Accurate predictions at observed parameter values do not,
however, ensure accurate responses to parameter changes.
For example, a surrogate used to choose an emission rate must predict
how adjusting that rate changes pollutant concentrations.
A model can fit the available observations while
recommending an ineffective intervention.


For small parameter changes, this response is described
by a derivative, or \emph{sensitivity}.
In a stochastic system, it may describe an individual
trajectory's response with random inputs held fixed,
or the average response across trajectories.
Physical models and simulators supply such derivatives
through analytic expressions, differentiation, or
numerical perturbations. We call this information
\emph{mechanistic sensitivities}.

Schr\"odinger bridges learn stochastic processes
connecting prescribed endpoint distributions while
remaining close to a reference process
\citep{leonard2014survey}.
Neural and conditional formulations make them useful
for learning parameter-dependent outcome distributions
\citep{debortoli2021diffusion,shi2023dsbm,
tong2024simulationfree,shi2022conditional}.
Yet endpoint agreement at finitely many parameter
values leaves their local responses undetermined.
Derivative-informed scientific learning offers additional
supervision
\citep{czarnecki2017sobolev,oleary2024dino},
raising the question:
\emph{How should mechanistic sensitivities constrain
a learned stochastic transport?}

We introduce \textbf{Tangent Schr\"odinger Bridge
Matching (Tangent-SBM)}, which propagates parameter
sensitivities alongside learned trajectories and
supervises their terminal responses.
These responses can vary across trajectories:
for example, changing emissions may affect pollutant
concentrations differently under different wind
conditions.
If supervision specifies only the average effect,
forcing every trajectory to match it also suppresses
response variability.
Our two-rollout objective instead matches the average
without this additional penalty.

Because sensitivities and endpoint observations come
from the same simulator, we compare response labels
with spending that budget on more outcome samples.
We also test direct predictors when only a mean,
such as average pollutant concentration, is needed;
Tangent-SBM additionally models the distribution
of possible outcomes.


Our contributions are:
\begin{itemize}
\item \textbf{Response-supervised stochastic transport.}
We introduce Tangent-SBM with distinct objectives
for individual and average response targets.

    \item \textbf{From sensitivities to interventions.}
    We show why matching observed outcomes can leave
    parameter responses incorrect, and establish when
    accurate sensitivities lead to accurate predictions
    of intervention effects and better decisions.

    \item \textbf{Controlled evaluation across four systems.}
    Gaussian, double-well, reaction--diffusion, and
    stochastic Navier--Stokes experiments assess response
    accuracy, endpoint fidelity, and, for stochastic
    systems, distributional fit. Controls test target
    correctness and simulation-budget allocation;
    calibrated intervention selection tests downstream
    usefulness.
\end{itemize}

\begin{figure}[t]
    \centering
\includegraphics[width=1.0\linewidth]
        {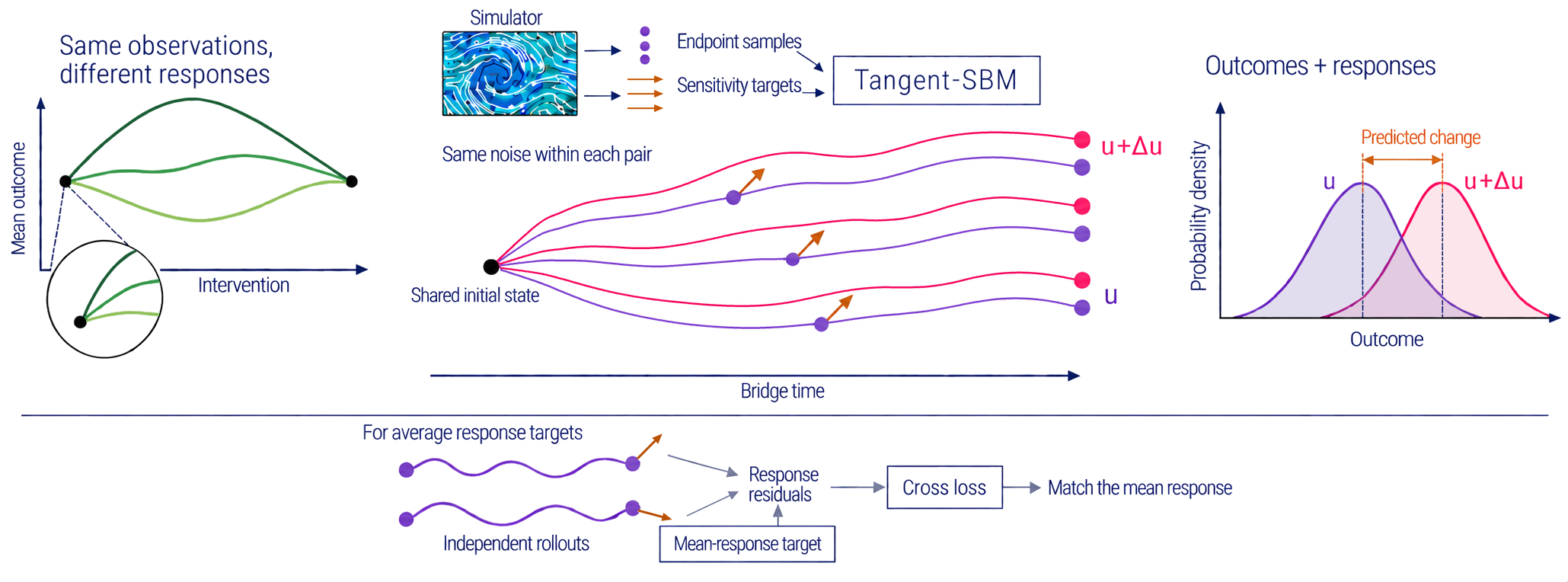
    }
    \caption{
Tangent-SBM learns outcome distributions and
intervention responses by supervising trajectory
sensitivities. Independent rollout pairs match
mean-response targets without penalizing response variability.
    }
    \label{fig:framework}
\end{figure}

\section{Related Work}
\label{sec:related_work}

\paragraph{Schr\"odinger bridges and stochastic generative models.}
Schr\"odinger bridges connect endpoint distributions through a process close to a reference diffusion \citep{leonard2014survey}, linking stochastic transport with entropy-regularized optimal transport \citep{cuturi2013sinkhorn,peyre2019computational}. Neural solvers include deep generative bridges, diffusion-based iterative fitting, and DSBM \citep{wang2021deep,debortoli2021diffusion,shi2023dsbm}, with extensions to conditional sampling and simulation-free training \citep{shi2022conditional,tong2024simulationfree}. Generalized and twisted formulations incorporate trajectory costs or potentials \citep{liu2022deepgsb,liu2023gsbm,noble2026tsbm}. Related diffusion, flow-matching, stochastic-interpolant, and neural-SDE models provide alternative ways to learn distributions and stochastic dynamics \citep{ho2020denoising,song2021score,lipman2023flow,albergo2023interpolants,kidger2021sde}. Tangent-SBM adds mechanistic sensitivity supervision
to constrain intervention responses beyond endpoint fitting.


\paragraph{Scientific surrogates and derivative-informed learning.}
Physics-informed learning incorporates governing
equations into training
\citep{raissi2019physics,karniadakis2021physics},
while standard deterministic neural operators map
input functions to output fields
\citep{lu2021deeponet,li2021fourier,kovachki2023operator}. Conditional stochastic bridges instead model a
distribution of possible outcomes for the same input. Physical constraints also inform diffusion-based field reconstruction \citep{li2025physicsaligned}. Explicit derivative supervision is central to Sobolev training and Jacobian matching \citep{czarnecki2017sobolev,srinivas2018jacobian}, and derivative-informed neural operators apply this principle to scientific surrogates \citep{oleary2024dino}. Subsequent work demonstrates its use in optimization under uncertainty and develops derivative-informed Fourier neural operators \citep{luo2025optimization,yao2025difno}. Tangent-SBM distinguishes individual and mean
responses in stochastic transports.
Deterministic controls test response supervision
beyond bridges; stochastic experiments assess
both outcome distributions and intervention responses.


\paragraph{Stochastic sensitivities and response objectives.}
Automatic differentiation and SDE sensitivity methods provide tools for differentiating numerical stochastic models \citep{baydin2018autodiff,li2020scalable}. The training objective must additionally reflect the supplied target: regressing each random response against a mean target with squared error penalizes response variance alongside mean error. We identify this effect and use independent rollouts to isolate the mean-error term. The construction follows the established double-sampling principle for squared expected residuals \citep{baird1995residual}; our contribution is its formulation and evaluation within mechanistic response supervision for stochastic bridges. 


\section{Tangent Schr\"odinger Bridge Matching}
\label{sec:method}

Tangent-SBM trains stochastic transports to match
observed outcomes and their responses to
interventions.
The key distinction is whether sensitivity targets
describe individual trajectories or only their
average: these require different supervision
objectives.
Figure~\ref{fig:framework} summarizes the framework.


\subsection{Conditional stochastic transport}
\label{sec:setting}

Let $c$ denote the context, including the initial state,
and $u\in\mathbb{R}^p$ the intervention.
The physical outcome $Y^*(c,u)$ has conditional mean
$m^*(c,u)=\mathbb{E}[Y^*(c,u)\mid c,u]$.
Our model generates $Y_\theta=h(X_T,u)$ through
\begin{equation}
dX_t=b_\theta(X_t,u,t)\,dt+\sigma_t\,dW_t,
\qquad X_0=x_0(c),
\label{eq:sde}
\end{equation}
where $b_\theta$ is the learned drift, $W_t$ is Brownian motion, a continuous random process
with independent Gaussian increments, and $h$ returns the predicted
quantity, such as the terminal field.
Write $m_\theta(c,u)=\mathbb{E}[Y_\theta\mid c,u]$.

We fit the transport using conditional DSBM
\citep{shi2022conditional,shi2023dsbm}.
Context dependence is suppressed in the drift notation.
We assume time-dependent additive noise $\sigma_t$
and an intervention-independent initial state.
Bridge time parameterizes the transport and need
not coincide with physical simulation time.

\subsection{Tracking trajectory responses}
\label{sec:tangent}

For a unit intervention direction $v$, define
the directional tangent
\begin{equation}
R_t^v=
\left.\frac{d}{d\epsilon}
X_t(u+\epsilon v;W)\right|_{\epsilon=0}.
\label{eq:tangent_definition}
\end{equation}
Holding the Brownian path fixed isolates the
effect of the intervention.
Differentiating \eqref{eq:sde} gives
\begin{equation}
dR_t^v=
\left[
\partial_x b_\theta(X_t,u,t)R_t^v
+\partial_u b_\theta(X_t,u,t)v
\right]dt,
\qquad R_0^v=0.
\label{eq:tangent}
\end{equation}
The two terms capture the effects of changes
in the state and the intervention, respectively.
The terminal response is
\begin{equation}
Z_\theta(c,u,v;W)
=\partial_x h(X_T,u)R_T^v+\partial_u h(X_T,u)v.
\label{eq:z}
\end{equation}
For terminal-state prediction, $h(X_T,u)=X_T$
and $Z_\theta=R_T^v$.

Jacobians collect partial derivatives of vector-valued
functions. Automatic differentiation computes
Jacobian--vector products without forming these
matrices explicitly \citep{pearlmutter1994fast,baydin2018autodiff},
following pathwise stochastic sensitivity analysis
\citep{glasserman1991gradient,li2020scalable}.
Our implementation differentiates the discretized
rollout.
Under the conditions in Appendix~\ref{app:regularity},
\begin{equation}
\mathbb{E}[Z_\theta\mid c,u,v]=J_\theta(c,u)v,
\qquad J_\theta(c,u)=\partial_u m_\theta(c,u).
\label{eq:mean_response}
\end{equation}
Thus, $Z_\theta$ is one trajectory's response,
whereas $J_\theta v$ is the response of the
mean prediction.

\subsection{Supervising individual and average responses}
\label{sec:response_objectives}

Derivative supervision constrains parameter dependence
in addition to output values
\citep{czarnecki2017sobolev,oleary2024dino}.
For stochastic outputs, the loss must distinguish
individual-response targets from mean-response targets.

\paragraph{Realization-level targets.}
Let $Z^*(c,u,v;\xi)$ be the physical response
with simulator randomness $\xi$ held fixed.
Comparing individual responses requires a specified,
$\theta$-independent pairing $\pi$ of model noise $W$
and simulator noise $\xi$, conditional on $(c,u,v)$.
Under this pairing,
\begin{equation}
\widehat\ell_{\mathrm{path}}
=\|Z_\theta(c,u,v;W)-Z^*(c,u,v;\xi)\|_2^2.
\label{eq:path_loss}
\end{equation}
Writing $\bar z_\theta$ and $\bar z^*$ for their
conditional means, Jensen's inequality gives
\begin{equation}
\|\bar z_\theta-\bar z^*\|_2^2
\leq
\mathbb{E}_{\pi}[\widehat\ell_{\mathrm{path}}\mid c,u,v].
\label{eq:path_mean_bound}
\end{equation}
The loss depends on the pairing.
When only a mean target is supplied, or no
trajectory pairing is justified, we instead
use the following objective.

\paragraph{Conditional-mean targets.}
Let $\bar z^*=J^*(c,u)v$, where
$J^*=\partial_u m^*$, and
$\bar z_\theta=\mathbb{E}[Z_\theta\mid c,u,v]$.
Regressing each response against the mean target gives
\begin{equation}
\mathbb{E}\|Z_\theta-\bar z^*\|_2^2
=
\|\bar z_\theta-\bar z^*\|_2^2
+\mathbb{E}\|Z_\theta-\bar z_\theta\|_2^2,
\label{eq:variance_penalty}
\end{equation}
with expectations conditional on $(c,u,v)$.
The second term penalizes response variability
although the target specifies only the mean.
Our Sobolev-DSBM (single-rollout) baseline uses
this squared-loss formulation.
Tangent-SBM instead uses two conditionally independent
rollouts at the same $(c,u,v)$, with responses
$Z_\theta^{(1)}$ and $Z_\theta^{(2)}$, to isolate
the mean-response error:
\begin{align}
\widehat\ell_{\mathrm{mean}}
&=\left\langle
Z_\theta^{(1)}-\bar z^*,
Z_\theta^{(2)}-\bar z^*
\right\rangle,
\label{eq:cross}\\
\mathbb{E}[\widehat\ell_{\mathrm{mean}}\mid c,u,v]
&=\|\bar z_\theta-\bar z^*\|_2^2.
\label{eq:cross_expectation}
\end{align}
This independent-sampling construction
\citep{baird1995residual} removes the additional
variance penalty.
Gradients pass through both responses.
Individual estimates may be negative, but their
expectation is the nonnegative mean-response error.
Appendix~\ref{app:meanloss} gives the derivation;
Appendix~\ref{app:noisytarget} treats bias and
sampling error in the supplied targets, which
are held fixed in these identities.

\subsection{Training with response supervision}
\label{sec:training}

We augment the current DSBM stage's drift-matching
loss \citep{shi2023dsbm}:
\begin{equation}
\mathcal L(\theta)
=\mathcal L_{\mathrm{BM}}(\theta)
+\lambda_{\mathrm{sens}}
\mathbb{E}_{(c,u,v)\sim\nu}
\left[
\mathbb{E}[\widehat\ell_{\mathrm{sens}}\mid c,u,v]
\right].
\label{eq:objective}
\end{equation}
Here $\nu$ specifies the supervision conditions
and directions; the inner expectation averages
rollout randomness and, where applicable,
the specified pairing.
We choose $\widehat\ell_{\mathrm{path}}$ or
$\widehat\ell_{\mathrm{mean}}$ according to the target.
Each response update samples $(c,u,v)$, propagates
model states and tangents, and evaluates this loss.
Both Tangent-SBM and the conditional baseline
receive $u$; response targets provide the
additional information.

Response supervision uses \emph{free rollouts}:
the learned model generates its terminal state
without a prescribed endpoint.
Unlike trajectories constrained to reach sampled
endpoints, these measure the response of the
sampler used for prediction.
Appendix~\ref{app:objective_ablations} compares
the two constructions.

Each direction adds a tangent array with the same
dimensions as the state and additional derivative
computations; mean supervision also requires
independent replicas.
Matching simulator-call budgets therefore does
not imply matching training cost.
The added loss balances endpoint and response fit
and does not generally preserve the original
DSBM solution.
\section{From local sensitivities to finite interventions}
\label{sec:theory}
We show why endpoint agreement leaves intervention
responses undetermined and establish when sensitivity
accuracy controls finite-change prediction error
and decision regret.
Fix the context $c$ and write $m_\theta(u)$ and $m^*(u)$
for the model and physical mean outcomes, with
Jacobians $J_\theta$ and $J^*$.
Proofs appear in Appendix~\ref{app:proofs}.

\paragraph{What endpoint observations leave undetermined.}
\begin{proposition}[Endpoint ambiguity]
\label{prop:nonid}
For finitely many distinct interventions
$u_1,\ldots,u_k\in\mathbb{R}^p$, there exist smooth
families of conditional distributions with finite
means that agree at every $u_i$ but have arbitrarily
different mean Jacobians at any chosen $u_i$.
\end{proposition}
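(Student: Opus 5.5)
The plan is to build the two families explicitly. Start from any smooth family $P_u$ of conditional distributions with finite means $m(u)$; a Gaussian $P_u=\mathcal N(m(u),\Sigma)$ with smooth $m$ is enough. Then obtain the second family by a location shift $\tilde P_u$, defined as the law of $Y+\phi(u)$ with $Y\sim P_u$. Here $\phi:\mathbb{R}^p\to\mathbb{R}^d$ is a smooth perturbation that vanishes at every $u_i$ but has a prescribed Jacobian at a chosen $u_j$. The shifted family then satisfies $\tilde P_{u_i}=P_{u_i}$ for all $i$, so the two families agree as distributions, not merely in their means. Its mean is $\tilde m(u)=m(u)+\phi(u)$, which is finite and smooth, and its Jacobian is $\partial_u\tilde m(u_j)=\partial_u m(u_j)+\partial_u\phi(u_j)$.

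The key step is constructing $\phi$. Because the $u_i$ are finitely many and distinct, set $r=\tfrac12\min_{i\neq l}\|u_i-u_l\|>0$. Choose a smooth bump $\psi:\mathbb{R}^p\to[0,1]$ with $\psi\equiv1$ on the ball $B(0,r/2)$ and support in $B(0,r)$. Given any target matrix $A\in\mathbb{R}^{d\times p}$, define
\[
\phi(u)=\psi(u-u_j)\,A(u-u_j).
\]
This is smooth and vanishes at $u_j$. For $i\neq j$ we have $\|u_i-u_j\|\ge 2r$, so $\psi(u_i-u_j)=0$ and $\phi(u_i)=0$. Near $u_j$ the bump equals $1$, so $\partial_u\phi(u_j)=A$.

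It follows that the Jacobians at $u_j$ differ by exactly $A$, which is arbitrary, so $\|\partial_u\tilde m(u_j)-\partial_u m(u_j)\|$ can be made as large as desired. To prescribe Jacobians at several $u_i$ at once, sum such bumps over the chosen indices; their supports are disjoint.

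Smoothness of the family should be checked in whatever sense Appendix~\ref{app:regularity} requires. For the Gaussian choice, the density $\mathcal N(y;m(u)+\phi(u),\Sigma)$ is jointly $C^\infty$ in $(y,u)$ and has uniformly integrable derivatives on compacts, so differentiating the mean under the integral is valid. I expect the main obstacle to be only this bookkeeping, namely specifying what "smooth family" means. If needed, one could instead realize both families as outputs of the SDE \eqref{eq:sde} by adding $\phi(u)$ through $h$, or by adding a drift term $\phi(u)/T$ to $b_\theta$.
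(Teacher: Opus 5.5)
Your proof is correct, but it takes a different route from the paper. The paper also uses a location shift, but with a global polynomial. It picks a direction $d$ with distinct projections $d^\top u_i$, sets $q(u)=\prod_i(d^\top u-d^\top u_i)$, and replaces $Y(u)$ by $Y(u)+a\,q(u)\,w$. Since $\nabla q(u_i)=d\prod_{l\neq i}d^\top(u_i-u_l)\neq 0$, the mean Jacobians differ by the rank-one matrix $a\,w\,\nabla q(u_i)^\top$. This discrepancy grows with $|a|$ at every $u_i$ simultaneously.

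You replace the polynomial with the localized perturbation $\psi(u-u_j)A(u-u_j)$, and your separation argument via $r$ is sound.

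\emph{What your version buys.} The Jacobian discrepancy at each chosen point is an arbitrary matrix $A$, of any rank and direction, and it can be prescribed independently at each $u_j$. So you show the Jacobians can be set to any values, not merely made large, whereas the paper's discrepancy always has row space spanned by $d$. Your perturbation is also bounded and compactly supported, so the two families coincide outside small neighborhoods of the chosen points.

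\emph{What the paper's version buys.} It is shorter, and the perturbation is a polynomial. The ambiguity therefore persists even within real-analytic (indeed polynomial-in-$u$) families, which forestalls the objection that non-identifiability relies on bump functions.

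You can get both advantages at a single chosen $u_j$. Replace $\psi(u-u_j)$ by the Lagrange polynomial $\ell_j(u)=\prod_{i\neq j}\frac{d^\top(u-u_i)}{d^\top(u_j-u_i)}$. Then $\ell_j(u)A(u-u_j)$ vanishes at every $u_i$ and has Jacobian $A$ at $u_j$. (Prescribing several points independently this way would require Hermite-type interpolation.)
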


Thus, endpoint agreement alone does not identify
intervention responses without further assumptions.
Sensitivity supervision supplies information about
this otherwise unconstrained parameter dependence.

\paragraph{From sensitivities to finite changes.}
\begin{proposition}[Finite-intervention error]
\label{thm:finite}
Let $\delta=rv$, with $r\geq0$ and $\|v\|_2=1$,
and suppose $m_\theta,m^*$ are continuously
differentiable on a neighborhood of the segment
$\{u+s\delta:s\in[0,1]\}$.
For the mean changes
$D_\theta=m_\theta(u+\delta)-m_\theta(u)$ and
$D^*=m^*(u+\delta)-m^*(u)$,
\begin{equation}
\|D_\theta-D^*\|_2^2
\leq r^2\int_0^1
\|[J_\theta(u+s\delta)-J^*(u+s\delta)]v\|_2^2\,ds.
\label{eq:finitebound}
\end{equation}
\end{proposition}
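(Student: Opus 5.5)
The plan is to reduce the finite change to an integral of directional derivatives along the segment, then apply Jensen's inequality (equivalently Cauchy--Schwarz) in the integration variable. Define the error path
\[
g(s)=m_\theta(u+s\delta)-m^*(u+s\delta), \qquad s\in[0,1].
\]
Then $D_\theta-D^*=g(1)-g(0)$. Both means are continuously differentiable on a neighborhood of the segment, so $g$ is $C^1$ on $[0,1]$. By the chain rule,
\[
g'(s)=[J_\theta(u+s\delta)-J^*(u+s\delta)]\delta=r\,[J_\theta(u+s\delta)-J^*(u+s\delta)]v .
\]
The fundamental theorem of calculus, applied componentwise to the vector-valued $g$, gives
\[
D_\theta-D^*=\int_0^1 g'(s)\,ds .
\]

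Next, bound the squared norm of this integral. Use Jensen's inequality for the convex function $\|\cdot\|_2^2$ with respect to the uniform probability measure on $[0,1]$:
\[
\Bigl\|\int_0^1 g'(s)\,ds\Bigr\|_2^2\le\int_0^1\|g'(s)\|_2^2\,ds .
\]
To avoid invoking a vector-valued Jensen inequality, one can argue componentwise instead. For each coordinate $j$, Cauchy--Schwarz gives
\[
\Bigl(\int_0^1 g_j'\,ds\Bigr)^2\le\int_0^1 (g_j')^2\,ds .
\]
Summing over $j$ yields the same inequality. Substituting $\|g'(s)\|_2^2=r^2\|[J_\theta-J^*](u+s\delta)v\|_2^2$ gives exactly \eqref{eq:finitebound}.

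No step here is a real obstacle; the only point needing care is justifying the integral representation. Continuity of the Jacobians on the compact segment makes $g'$ continuous and hence Riemann integrable, so the fundamental theorem of calculus applies and the right-hand integral is finite. If $r=0$, both sides vanish trivially, so we may take $r>0$ throughout.
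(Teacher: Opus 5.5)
Your proposal is correct and matches the paper's proof essentially step for step. The paper also writes $D_\theta-D^*=r\int_0^1[J_\theta(u+s\delta)-J^*(u+s\delta)]v\,ds$ via the fundamental theorem of calculus applied to $m_\theta-m^*$ along the segment, then applies Cauchy--Schwarz on the unit interval; your componentwise justification of the vector-valued step adds detail the paper leaves implicit.
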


The bound requires accurate sensitivities along
the intervention segment, not only at its starting
point.
If $\varepsilon_{\mathrm{fit}}$ and
$\varepsilon_{\mathrm{target}}$ denote root-mean-square
directional errors along this segment, respectively
between the model and supplied targets and between
those targets and the true sensitivities, then
\begin{equation}
\|D_\theta-D^*\|_2
\leq r(\varepsilon_{\mathrm{fit}}
+\varepsilon_{\mathrm{target}}).
\label{eq:targetbound}
\end{equation}
Here the segment average is over $s\in[0,1]$.
Both fitting accuracy and target quality therefore
matter.

Appendix~\ref{app:coverage} relates sampled directions
and finite collocation coverage; the placement of
response-supervision points, to accuracy along
the segment under additional assumptions.
Small empirical training loss alone does not
establish this coverage.

\paragraph{Consequences for intervention selection.}
The following perturbation bound connects accurate
response predictions to useful decisions
\citep{bonnans2000perturbation}.

\begin{proposition}[Intervention-selection error]
\label{cor:decision}
Let $\mathcal A$ be a finite action set with true
mean changes $g(a)$ and predictions $\widehat g(a)$
satisfying
$\sup_{a\in\mathcal A}\|\widehat g(a)-g(a)\|_2\leq\eta$.
For a desired change $y$, let $\widehat a$ minimize
$\|\widehat g(a)-y\|_2$ and let $a^*$ minimize
$\|g(a)-y\|_2$ over $\mathcal A$. Then
\begin{equation}
\|g(\widehat a)-y\|_2-\|g(a^*)-y\|_2
\leq2\eta.
\label{eq:tracking_regret}
\end{equation}
For squared tracking loss
$\ell(a)=\|g(a)-y\|_2^2$, it follows that
\begin{equation}
\ell(\widehat a)-\ell(a^*)
\leq4\eta\|g(a^*)-y\|_2+4\eta^2.
\label{eq:regret}
\end{equation}
\end{proposition}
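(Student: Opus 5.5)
The argument is a triangle-inequality sandwich that uses the two optimality properties together with the uniform bound $\eta$. I will first prove \eqref{eq:tracking_regret} and then derive \eqref{eq:regret} from it algebraically. No earlier result is needed beyond the hypothesis $\sup_{a}\|\widehat g(a)-g(a)\|_2\leq\eta$; finiteness of $\mathcal A$ only guarantees that the minimizers $\widehat a$ and $a^*$ exist.

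\textbf{Step 1 (tracking regret).} I will chain three inequalities:
\begin{equation*}
\|g(\widehat a)-y\|_2
\leq \|\widehat g(\widehat a)-y\|_2+\eta
\leq \|\widehat g(a^*)-y\|_2+\eta
\leq \|g(a^*)-y\|_2+2\eta .
\end{equation*}
The first inequality is the triangle inequality applied to $g(\widehat a)-y=(\widehat g(\widehat a)-y)+(g(\widehat a)-\widehat g(\widehat a))$. The second holds because $\widehat a$ minimizes the predicted tracking error, so it is at most the predicted error at $a^*$. The third is again the triangle inequality, now at $a^*$. Rearranging gives \eqref{eq:tracking_regret}. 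Notice that the optimality of $a^*$ is never used here; only the fact that it is a member of $\mathcal A$ matters.

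\textbf{Step 2 (squared loss).} Write $\alpha=\|g(\widehat a)-y\|_2$ and $\beta=\|g(a^*)-y\|_2$. Since $a^*$ is the true minimizer, $\alpha\geq\beta\geq0$, and Step 1 gives $\alpha\leq\beta+2\eta$. Then
\begin{equation*}
\ell(\widehat a)-\ell(a^*)=\alpha^2-\beta^2\leq(\beta+2\eta)^2-\beta^2=4\eta\beta+4\eta^2 .
\end{equation*}
The inequality uses only that $t\mapsto t^2$ is increasing on $[0,\infty)$ and that $0\leq\alpha\leq\beta+2\eta$. This is exactly \eqref{eq:regret}.

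Neither step poses a real obstacle. The only point needing care is to apply the minimality of $\widehat a$ to the \emph{predicted} objective $\|\widehat g(\cdot)-y\|_2$ rather than the true one. One should also check that the squaring step relies on nonnegativity of $\alpha$, not of $\alpha-\beta$, so it would remain valid even without using $\alpha\geq\beta$.
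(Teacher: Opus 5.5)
Your proof is correct and matches the paper's argument: the same three-step chain of triangle inequalities, using minimality of $\widehat a$ for the predicted objective, followed by squaring $\|g(\widehat a)-y\|_2\leq\|g(a^*)-y\|_2+2\eta$ and subtracting $\|g(a^*)-y\|_2^2$. Your side remarks are also accurate: the first bound does not use optimality of $a^*$, and the squaring step needs only nonnegativity.
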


For model finite-change predictions,
Equation~\ref{eq:targetbound} bounds $\eta$ by the
largest action-wise value of
$r_a(\varepsilon_{\mathrm{fit},a}
+\varepsilon_{\mathrm{target},a})$.
For the linear predictions used in our
decision study, linearization
errors must also be included
(Appendix~\ref{app:decision}).
The guarantee bounds the tracking error of the selected action, even when that action differs from the oracle's choice.
\section{Experiments}
\label{sec:experiments}

We test whether mechanistic supervision improves
intervention responses beyond endpoint fitting,
which aspects of supervision matter, and whether
the learned responses support useful decisions.

\subsection{Systems and evaluation}
\label{sec:experimental_setup}

Each benchmark predicts an outcome from an initial
state and an intervention. Endpoint examples
supervise outcomes; sensitivity targets supervise
how those outcomes change with the intervention.
We use two low-dimensional systems and two
spatial-field systems.

\paragraph{Conditional Gaussian.}
We construct a two-dimensional Gaussian outcome:
\begin{equation}
Y^*(x_0,u)
=Ax_0+B_1u+B_2u^2+B_3\sin(\pi u)+\epsilon,
\qquad
\epsilon\sim\mathcal{N}(0,\Sigma).
\label{eq:exp_gaussian}
\end{equation}
Here $x_0$ is the initial state, $u$ is the scalar
intervention, $A,B_1,B_2,B_3$ are fixed coefficients,
and $\epsilon$ is Gaussian noise with covariance
$\Sigma$.
Endpoint training uses $u\in\{-1,0,1\}$, where
the sinusoidal term vanishes but its derivative
does not. The analytic sensitivity therefore
reveals parameter dependence absent from the
training outcomes, providing a controlled
example of Proposition~\ref{prop:nonid}.

\paragraph{Stochastic double well.}
We use a double-well Langevin system
\citep{pavliotis2014stochastic}:
\begin{equation}
dX_\tau=
[X_\tau-X_\tau^3+0.30\tanh(u)]\,d\tau
+0.45\,dB_\tau.
\label{eq:exp_doublewell}
\end{equation}
Here $X_\tau$ is the particle position,
$B_\tau$ is standard Brownian motion, and
$\tau$ denotes physical time, distinct from
bridge time.
The intervention $u$ tilts the potential,
favoring one of its two wells.
We predict the terminal distribution from
the initial position and intervention.
Targets average simulator trajectory derivatives
to describe the response of the mean outcome.

\paragraph{PDEBench reaction--diffusion.}
The PDEBench simulator \citep{takamoto2022pdebench}
evolves two interacting fields:
\begin{equation}
\begin{aligned}
\partial_\tau q_1
&=q_1-q_1^3-k-q_2+D_1\Delta q_1,\\
\partial_\tau q_2
&=q_1-q_2+D_2\Delta q_2.
\end{aligned}
\label{eq:exp_pdebench}
\end{equation}
Here $q_1,q_2$ are the evolving fields,
$D_1,D_2$ are diffusion coefficients,
$k$ is a reaction parameter, and $\Delta$
is the spatial Laplacian.
We predict both terminal fields on a
$128\times128$ grid from their initial configuration.
The intervention vector $u=(u_1,u_2,u_3)$ sets
$D_1=D_1^0 2^{u_1}$, $D_2=D_2^0 2^{u_2}$,
and $k=k^0 2^{u_3}$, where superscript $0$
denotes reference values.
Endpoint training uses $\{-1,0,1\}^3$;
targets estimate directional sensitivities
by perturbing these parameters.
PDEBench tests tangent-based response supervision
in high-dimensional spatial fields.
Because the simulator is deterministic given
its inputs, this benchmark evaluates response
learning within the bridge framework rather
than the need for stochastic outcome modeling.

\paragraph{Stochastic Navier--Stokes (SNS).}
We use the SPDEBench solver \citep{zhu2025spdebench}:
\begin{equation}
d\omega=
[-(\mathbf v\cdot\nabla)\omega
+\nu(u)\Delta\omega+f]\,d\tau+d\mathcal W_\tau,
\qquad \nu(u)=\nu_0 2^u.
\label{eq:exp_sns}
\end{equation}
Here $\omega$ is vorticity, measuring local fluid
rotation, $\mathbf v$ is the incompressible velocity,
and $\nu(u)$ is viscosity with reference value
$\nu_0$.
The operator $\nabla$ is the spatial gradient;
the first two drift terms describe advection
and diffusion.
The terms $f$ and $\mathcal W_\tau$ supply
deterministic and spatially correlated random
forcing.
The model predicts terminal vorticity from
the initial field and intervention $u$, without
observing the random forcing.
A unit increase in $u$ doubles viscosity.
Mean-response targets average paired simulator
perturbations with shared forcing within each pair.

Parameters, simulator settings,
target construction, and training protocols
appear in Appendix~\ref{app:protocol}.

\paragraph{Evaluation conditions.}
\emph{Seen} uses endpoint-training intervention
values with fresh evaluation examples;
\emph{in-distribution (ID)} uses additional values
inside that range.
\emph{Near-OOD} and \emph{Far-OOD} use values
progressively farther outside it.
These labels describe intervention values.
Response supervision may extend beyond the
endpoint-training range; both domains are
specified in Appendix~\ref{app:protocol}.

\paragraph{Metrics.}
\emph{Endpoint error} compares predicted and
physical mean outcomes.
\emph{Sensitivity error} $E_J$ measures error
in the local response, normalized by the
reference response magnitude.
An exact response gives $E_J=0$; predicting
no response gives $E_J=1$ for a nonzero target.
\emph{Finite-change error} compares
$m_\theta(c,u+\delta)-m_\theta(c,u)$ with the
physical mean change for a nonzero intervention
displacement $\delta$.

Endpoint and finite-change errors use RMSE
for Gaussian and double well, and relative
$\ell_2$ error for PDEBench and SNS.
All errors are lower-is-better.
Table~\ref{tab:id} reports ID means and sample
standard deviations over three training seeds.
Appendix~\ref{app:protocol} defines the metrics
and evaluation budgets;
Appendix~\ref{app:distributional} additionally
assesses outcome variability and distributional fit.

\paragraph{Baselines.}
We compare with unconditional DSBM,
Conditional DSBM, GSBM, and TSBM
\citep{shi2023dsbm,liu2023gsbm,noble2026tsbm}.
These four baselines receive no sensitivity targets;
all except unconditional DSBM receive intervention
inputs. GSBM and TSBM use trajectory costs too.
Our \emph{Sobolev-DSBM (single-rollout)} adaptation
\citep{czarnecki2017sobolev} uses the same
mean-response targets as Tangent-SBM but minimizes
each rollout's squared response error. We evaluate this control on double well and SNS,
where Tangent-SBM uses two-rollout mean supervision.
On Gaussian and PDEBench, Tangent-SBM already uses
single-rollout squared response matching.

\begin{table}[t]
\centering
\small
\setlength{\tabcolsep}{5pt}
\caption{
ID endpoint, sensitivity, and finite-change errors
across four systems. Lower is better.
Entries are mean $\pm$ sample standard deviation
over three training seeds.
Metric definitions and method-specific evaluation
budgets appear in Appendix~\ref{app:protocol}.
}
\label{tab:id}
\begin{tabular}{llccc}
\toprule
Dataset & Method & Endpoint & $E_J$ & Finite change\\
\midrule
Gaussian
& DSBM
& $0.714\pm0.002$ & $1.000\pm0.000$ & $0.321\pm0.000$\\
& Conditional DSBM
& $0.433\pm0.014$ & $0.866\pm0.003$ & $0.275\pm0.001$\\
& GSBM
& $0.440\pm0.002$ & $0.865\pm0.001$ & $0.276\pm0.000$\\
& TSBM
& $0.440\pm0.000$ & $0.865\pm0.001$ & $0.275\pm0.000$\\
& \textbf{Tangent-SBM}
& $\mathbf{0.236\pm0.006}$ & $\mathbf{0.050\pm0.005}$ & $\mathbf{0.016\pm0.001}$\\
\midrule
Double well
& DSBM
& $0.336\pm0.005$ & $1.000\pm0.000$ & $0.110\pm0.000$\\
& Conditional DSBM
& $0.148\pm0.006$ & $0.304\pm0.039$ & $0.033\pm0.005$\\
& GSBM
& $0.151\pm0.008$ & $0.281\pm0.014$ & $0.031\pm0.001$\\
& TSBM
& $0.139\pm0.005$ & $0.288\pm0.015$ & $0.032\pm0.002$\\
& Sobolev-DSBM
& $0.145\pm0.006$ & $0.272\pm0.051$ & $0.030\pm0.006$\\
& \textbf{Tangent-SBM}
& $\mathbf{0.136\pm0.001}$ & $\mathbf{0.197\pm0.031}$ & $\mathbf{0.021\pm0.003}$\\
\midrule
PDEBench
& DSBM
& $0.612\pm0.034$ & $1.000\pm0.000$ & $1.000\pm0.000$\\
& Conditional DSBM
& $0.605\pm0.038$ & $0.847\pm0.054$ & $0.843\pm0.054$\\
& GSBM
& $0.918\pm0.022$ & $0.786\pm0.009$ & $0.784\pm0.009$\\
& TSBM
& $0.904\pm0.013$ & $0.789\pm0.027$ & $0.787\pm0.028$\\
& \textbf{Tangent-SBM}
& $\mathbf{0.598\pm0.069}$ & $\mathbf{0.508\pm0.045}$ & $\mathbf{0.510\pm0.046}$\\
\midrule
SNS
& DSBM
& $0.298\pm0.021$ & $1.000\pm0.000$ & $1.000\pm0.000$\\
& Conditional DSBM
& $0.265\pm0.047$ & $2.658\pm0.359$ & $2.429\pm0.158$\\
& GSBM
& $0.227\pm0.017$ & $3.555\pm0.821$ & $3.207\pm0.768$\\
& TSBM
& $0.176\pm0.002$ & $2.454\pm0.070$ & $2.185\pm0.059$\\
& Sobolev-DSBM
& $0.253\pm0.034$ & $2.624\pm1.103$ & $2.432\pm0.923$\\
& \textbf{Tangent-SBM}
& $\mathbf{0.265\pm0.028}$ & $\mathbf{1.002\pm0.075}$ & $\mathbf{1.291\pm0.354}$\\
\bottomrule
\end{tabular}
\end{table}

\subsection{Endpoint fit and intervention response}
\label{sec:main_results}

On Gaussian ID, Tangent-SBM reduces $E_J$ from
$0.866$ to $0.050$ relative to Conditional DSBM,
while endpoint RMSE decreases from $0.433$ to $0.236$.
GSBM and TSBM have response errors close to the
conditional baseline.
The analytic construction illustrates why fitting
the observed endpoint distributions need not recover
the missing parameter dependence.

On the double well, Tangent-SBM reduces ID response
error from $0.304$ to $0.197$, with endpoint RMSE
decreasing from $0.148$ to $0.136$.
GSBM and TSBM attain response errors of $0.281$
and $0.288$.
The improvement over Conditional DSBM is larger on
Near-OOD ($1.205$ versus $0.303$), while Tangent-SBM's
Far-OOD error remains $0.984$.
On PDEBench, response error decreases by 37--44\%
across evaluated splits relative to Conditional
DSBM, with comparable or lower endpoint errors.
The selected GSBM and TSBM adaptations also have
larger response errors in these runs.

Tangent-SBM reduces SNS ID sensitivity error by
62\% relative to Conditional DSBM and 59\% relative
to TSBM, while improving response alignment from
0.121 and 0.058 to 0.592, respectively
(Appendix~\ref{app:target_coverage},
Table~\ref{tab:response_geometry}).
Its sensitivity error remains close to the
zero-response reference, and its finite-change
error remains above it.
Nevertheless, its improved alignment provides
directional information that a zero response lacks.
Magnitude calibration can rescale these predictions
but cannot improve their alignment.
Section~\ref{sec:decision} evaluates whether this
information supports useful intervention choices
(Appendix~\ref{app:protocol}). Appendix~\ref{app:generalization} reports endpoint,
sensitivity, and finite-change errors across
Seen, Near-OOD, and Far-OOD regions. Tangent-SBM improves intervention responses while
maintaining a similar ability to reproduce the
spread and distribution of outcomes as Conditional
DSBM across the three stochastic systems.
Appendix~\ref{app:distributional} reports these
distributional comparisons.
Figure~\ref{fig:sns} compares endpoint predictions
and intervention-induced field changes for the same
initial condition and viscosity intervention across
methods, with predictions averaged over three
training seeds.

\begin{figure}[t]
    \centering
    \includegraphics[width=0.8\linewidth]
        {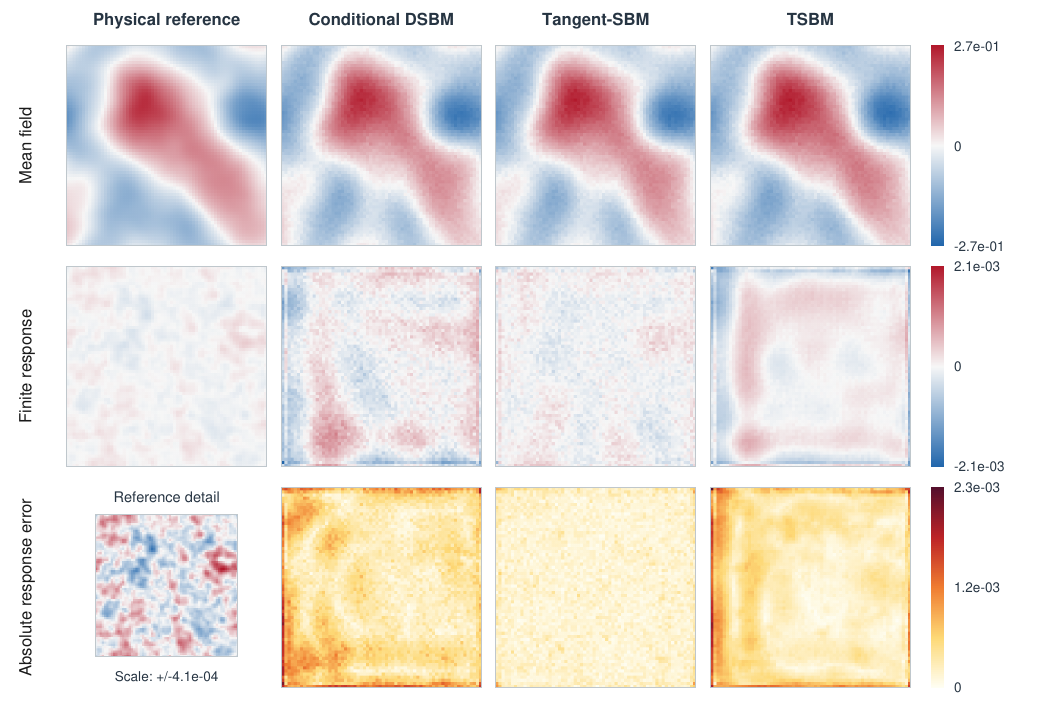}
    \caption{
SNS predictions for a fixed Seen example.
Rows show the mean endpoint field, its change under
a viscosity intervention, and the error in that change. Color scales are shared within rows; the bottom-left
panel repeats the physical change from the second row
on a narrower color scale.
    }
    \label{fig:sns}
\end{figure}

\subsection{What makes response supervision effective?}
\label{sec:ablations}

\paragraph{Matching the meaning of response targets.}
Sobolev-DSBM (single-rollout) matches each
trajectory response to the supplied mean target.
Tangent-SBM instead matches the average response
without the additional variance penalty.
On double well, this reduces ID mean-response
RMSE from $0.104\pm0.025$ to $0.066\pm0.014$.
Appendix~\ref{app:objective_ablations} reports
response variability and compares supervision
through free and endpoint-constrained rollouts.

\paragraph{Correct targets and response coverage.}
On PDEBench ID, assigning sensitivity targets
to the wrong examples or reversing their signs
increases $E_J$ from $0.508$ to $1.010$ and
$1.647$, respectively.
Thus, adding a derivative loss with incorrect
information does not reproduce the gains.

On SNS, retaining 25\%, 50\%, or all additional
response-supervision locations gives ID errors
of $2.075$, $1.726$, and $1.002$.
All variants retain response labels at the
endpoint-training intervention values.
The decreasing mean errors support broader
response coverage in this experiment.
Appendix~\ref{app:target_coverage} reports seed
variability, results across systems, and SNS
direction and magnitude diagnostics, distinguishing
better alignment from merely smaller predictions.

\paragraph{Additional endpoints versus response labels.}
We allocate the response-label simulation budget
to extra endpoint examples instead:
\emph{LocalBudget} samples nearby intervention
values, while \emph{SpreadBudget} samples more
broadly across the intervention domain.
Neither uses derivative targets.

On PDEBench ID, these controls give $E_J$
of approximately $0.85$--$0.88$, compared with
$0.508$ for Tangent-SBM.
On SNS, LocalBudget and SpreadBudget give
$2.929$ and $3.649$, versus $1.002$ for
Tangent-SBM under the same evaluation budget.
The tested endpoint-only allocations therefore
do not recover the gains from explicit response
supervision.
Appendix~\ref{app:budget_controls} specifies
the matched simulator budgets and training procedures.

\paragraph{Response supervision beyond bridges.}
Adding response supervision reduces ID sensitivity
error from $0.865$ to $0.017$ for the Gaussian
terminal predictor and from $0.416$ to $0.204$
for the PDEBench terminal predictor.
Direct response-supervised predictors are strong
alternatives when only mean outcomes are needed;
Tangent-SBM additionally retains a stochastic
model of outcome distributions.
Appendix~\ref{app:alternative_predictors} compares
terminal predictors, deterministic operators,
and conditional flow matching across systems.
Appendix~\ref{app:additional_ablations} examines
forward and backward response supervision
in the Gaussian system.

\subsection{Intervention selection}
\label{sec:decision}

We select a log-viscosity change from
$\{0,\pm0.125,\pm0.25\}$ to approach a desired
SNS mean-field change.
Each method selects the action whose calibrated
local-response prediction is closest to the goal.
The simulator evaluates the chosen action;
the \emph{oracle} selects the best candidate
using simulator-estimated outcomes.

We compare Tangent-SBM with Conditional DSBM,
TSBM (the strongest SNS endpoint baseline),
and unconditional DSBM, whose zero response selects
no intervention.
Testing uses 32 tasks from 16 contexts with two
goals each, shared across three training seeds.
Appendix~\ref{app:protocol} details calibration
and evaluation.


\begin{table}[t]
\centering\small
\caption{Calibrated SNS intervention selection on
32 tasks, shared across three training seeds.
Tracking error and normalized regret are
lower-is-better. Agreement and improvement
percentages average seed-level task rates.}
\label{tab:decisions}
\begin{tabular}{lcccc}
\toprule
Method & Tracking error & Regret & Agreement & Improvement\\
\midrule
DSBM
& $1.000\pm0.000$ & $0.965\pm0.000$
& 0.0\% & 0.0\%\\
Conditional DSBM
& $0.553\pm0.390$ & $0.625\pm0.909$
& 53.1\% & 84.4\%\\
Tangent-SBM
& $0.256\pm0.040$ & $0.048\pm0.028$
& 79.2\% & 100.0\%\\
TSBM
& $0.724\pm0.313$ & $0.742\pm0.460$
& 35.4\% & 47.9\%\\
\midrule
Finite-grid oracle
& $0.186$ & $0$
& 100\% & 100\%\\
\bottomrule
\end{tabular}
\end{table}

Table~\ref{tab:decisions} shows that Tangent-SBM
reduces tracking error, the distance from the
desired change, by 54\% relative to Conditional
DSBM and 65\% relative to TSBM.
It improves over no intervention on all 96
seed--task pairs and selects the simulator's
best candidate action in 79.2\% of decisions.
Its normalized regret, measuring additional
squared tracking loss relative to that oracle,
is $0.048$.
These results demonstrate useful intervention
selection.

\section{Discussion and limitations}

\method\ shows how mechanistic sensitivities can
complement endpoint observations to improve
intervention responses in stochastic transport.
Across four systems, the comparisons with
conditional bridges and simulator-budget controls
support the value of explicit response supervision.
On the stochastic benchmarks, these gains coexist
with distributional fit comparable to Conditional
DSBM. The SNS decision study further
shows that learned responses can guide useful
intervention choices.

Better endpoint predictions do not necessarily yield
better intervention decisions.
Benefits depend on sensitivity quality and coverage,
and tangent supervision adds computation.
Tangent-SBM offers a practical way to learn intervention
responses alongside outcome distributions.


\newpage

\subsection*{AI use statement}

We have not used generative AI tools for any of the tasks with required
disclosure: generating synthetic data sets, developing theoretical models or
conceptual frameworks, formulating mathematical claims, providing ingredients
for or writing proofs, proposing or refining hypotheses, designing or giving
feedback on research methodology or experiments, implementing methods, cleaning
or reformatting datasets, or interpreting results. Translation and qualitative
or thematic data analysis are not applicable to this work. We used generative AI
tools only to edit author-written text for readability, including grammar,
phrasing, and concision. All technical content, including the method,
derivations, experiments, and analysis, was developed and written by the
authors. Every AI-suggested edit was reviewed by the authors, who accepted,
modified, or rejected it and checked that it did not change the technical
meaning of the original text. We take responsibility for the final content of
this work.

\subsection*{Ethics statement}

This work studies simulated physical systems and
does not involve human participants or personal data.
Its intended application is scientific prediction
and intervention analysis.

\subsection*{Reproducibility statement}
Section~\ref{sec:method} specifies the model,
tangent propagation, and response-supervision
objectives.
Appendix~\ref{app:proofs} provides assumptions
and proofs for the theoretical statements.
Appendix~\ref{app:protocol} documents dataset
construction, model configurations, training
procedures, evaluation metrics, sampling budgets,
and intervention-selection protocols.
Appendices~\ref{app:objective_ablations},
\ref{app:target_coverage}, and
\ref{app:budget_controls} describe the objective,
target, and simulator-budget controls.
Results are reported across three training seeds,
with variability summarized using sample
standard deviations.
Source code, evaluation scripts, and setup
instructions are available at
\url{https://anonymous.4open.science/r/Tangent-SBM-4508/README.md}.





\newpage

\bibliography{references}

\begin{thebibliography}{36}
\providecommand{\natexlab}[1]{#1}
\providecommand{\url}[1]{\texttt{#1}}
\expandafter\ifx\csname urlstyle\endcsname\relax
  \providecommand{\doi}[1]{doi: #1}\else
  \providecommand{\doi}{doi: \begingroup \urlstyle{rm}\Url}\fi

\bibitem[Albergo et~al.(2023)Albergo, Boffi, and Vanden-Eijnden]{albergo2023interpolants}
Michael~S. Albergo, Nicholas~M. Boffi, and Eric Vanden-Eijnden.
\newblock Stochastic interpolants: A unifying framework for flows and diffusions.
\newblock \emph{arXiv preprint arXiv:2303.08797}, 2023.
\newblock URL \url{https://arxiv.org/abs/2303.08797}.

\bibitem[Baird(1995)]{baird1995residual}
Leemon Baird.
\newblock Residual algorithms: Reinforcement learning with function approximation.
\newblock In \emph{Proceedings of the Twelfth International Conference on Machine Learning}, pp.\  30--37, 1995.
\newblock \doi{10.1016/B978-1-55860-377-6.50013-X}.

\bibitem[Baydin et~al.(2018)Baydin, Pearlmutter, Radul, and Siskind]{baydin2018autodiff}
Atilim~Gunes Baydin, Barak~A. Pearlmutter, Alexey~Andreyevich Radul, and Jeffrey~Mark Siskind.
\newblock Automatic differentiation in machine learning: A survey.
\newblock \emph{Journal of Machine Learning Research}, 18\penalty0 (153):\penalty0 1--43, 2018.
\newblock URL \url{https://jmlr.org/papers/v18/17-468.html}.

\bibitem[Bonnans \& Shapiro(2000)Bonnans and Shapiro]{bonnans2000perturbation}
J.~Fr{\'e}d{\'e}ric Bonnans and Alexander Shapiro.
\newblock \emph{Perturbation Analysis of Optimization Problems}.
\newblock Springer, 2000.
\newblock \doi{10.1007/978-1-4612-1394-9}.

\bibitem[Cuturi(2013)]{cuturi2013sinkhorn}
Marco Cuturi.
\newblock Sinkhorn distances: Lightspeed computation of optimal transport.
\newblock In \emph{Advances in Neural Information Processing Systems}, 2013.
\newblock URL \url{http://papers.neurips.cc/paper/4927-sinkhorn-distances-lightspeed-computation-of-optimal-transport.pdf}.

\bibitem[Czarnecki et~al.(2017)Czarnecki, Osindero, Jaderberg, Swirszcz, and Pascanu]{czarnecki2017sobolev}
Wojciech~Marian Czarnecki, Simon Osindero, Max Jaderberg, Grzegorz Swirszcz, and Razvan Pascanu.
\newblock Sobolev training for neural networks.
\newblock In \emph{Advances in Neural Information Processing Systems}, 2017.
\newblock URL \url{https://arxiv.org/abs/1706.04859}.

\bibitem[De~Bortoli et~al.(2021)De~Bortoli, Thornton, Heng, and Doucet]{debortoli2021diffusion}
Valentin De~Bortoli, James Thornton, Jeremy Heng, and Arnaud Doucet.
\newblock Diffusion {Schr{\"o}dinger} bridge with applications to score-based generative modeling.
\newblock In \emph{Advances in Neural Information Processing Systems}, 2021.
\newblock URL \url{https://arxiv.org/abs/2106.01357}.

\bibitem[Glasserman(1991)]{glasserman1991gradient}
Paul Glasserman.
\newblock \emph{Gradient Estimation via Perturbation Analysis}.
\newblock Springer, 1991.
\newblock URL \url{https://link.springer.com/book/9780792390954}.

\bibitem[Ho et~al.(2020)Ho, Jain, and Abbeel]{ho2020denoising}
Jonathan Ho, Ajay Jain, and Pieter Abbeel.
\newblock Denoising diffusion probabilistic models.
\newblock In \emph{Advances in Neural Information Processing Systems}, 2020.
\newblock URL \url{https://arxiv.org/abs/2006.11239}.

\bibitem[Karniadakis et~al.(2021)Karniadakis, Kevrekidis, Lu, Perdikaris, Wang, and Yang]{karniadakis2021physics}
George~Em Karniadakis, Yannis Kevrekidis, Lu~Lu, Paris Perdikaris, Sifan Wang, and Liu Yang.
\newblock Physics-informed machine learning.
\newblock \emph{Nature Reviews Physics}, 3:\penalty0 422--440, 2021.
\newblock \doi{10.1038/s42254-021-00314-5}.

\bibitem[Kidger et~al.(2021)Kidger, Foster, Li, and Lyons]{kidger2021sde}
Patrick Kidger, James Foster, Xuechen Li, and Terry~J. Lyons.
\newblock Neural {SDEs} as infinite-dimensional {GANs}.
\newblock In \emph{Proceedings of the 38th International Conference on Machine Learning}, volume 139 of \emph{Proceedings of Machine Learning Research}, pp.\  5453--5463, 2021.
\newblock URL \url{https://proceedings.mlr.press/v139/kidger21b.html}.

\bibitem[Kovachki et~al.(2023)Kovachki, Li, Liu, Azizzadenesheli, Bhattacharya, Stuart, and Anandkumar]{kovachki2023operator}
Nikola Kovachki, Zongyi Li, Burigede Liu, Kamyar Azizzadenesheli, Kaushik Bhattacharya, Andrew Stuart, and Anima Anandkumar.
\newblock Neural operator: Learning maps between function spaces with applications to {PDEs}.
\newblock \emph{Journal of Machine Learning Research}, 24\penalty0 (89):\penalty0 1--97, 2023.
\newblock URL \url{https://jmlr.org/papers/v24/21-1524.html}.

\bibitem[L{\'e}onard(2014)]{leonard2014survey}
Christian L{\'e}onard.
\newblock A survey of the {Schr\"odinger} problem and some of its connections with optimal transport.
\newblock \emph{Discrete and Continuous Dynamical Systems}, 34\penalty0 (4):\penalty0 1533--1574, 2014.
\newblock \doi{10.3934/dcds.2014.34.1533}.

\bibitem[Li et~al.(2020)Li, Wong, Chen, and Duvenaud]{li2020scalable}
Xuechen Li, Ting-Kam~Leonard Wong, Ricky T.~Q. Chen, and David Duvenaud.
\newblock Scalable gradients for stochastic differential equations.
\newblock In \emph{Proceedings of the International Conference on Artificial Intelligence and Statistics}, volume 108 of \emph{Proceedings of Machine Learning Research}, pp.\  3870--3882, 2020.
\newblock URL \url{https://proceedings.mlr.press/v108/li20i.html}.

\bibitem[Li et~al.(2025)Li, Dou, Fang, Han, Deng, and Yang]{li2025physicsaligned}
Zeyu Li, Hongkun Dou, Shen Fang, Wang Han, Yue Deng, and Lijun Yang.
\newblock Physics-aligned field reconstruction with diffusion bridge.
\newblock In \emph{International Conference on Learning Representations}, 2025.
\newblock URL \url{https://openreview.net/forum?id=D042vFwJAM}.

\bibitem[Li et~al.(2021)Li, Kovachki, Azizzadenesheli, Liu, Bhattacharya, Stuart, and Anandkumar]{li2021fourier}
Zongyi Li, Nikola Kovachki, Kamyar Azizzadenesheli, Burigede Liu, Kaushik Bhattacharya, Andrew Stuart, and Anima Anandkumar.
\newblock Fourier neural operator for parametric partial differential equations.
\newblock In \emph{International Conference on Learning Representations}, 2021.
\newblock URL \url{https://openreview.net/forum?id=c8P9NQVtmnO}.

\bibitem[Lipman et~al.(2023)Lipman, Chen, Ben-Hamu, Nickel, and Le]{lipman2023flow}
Yaron Lipman, Ricky T.~Q. Chen, Heli Ben-Hamu, Maximilian Nickel, and Matthew Le.
\newblock Flow matching for generative modeling.
\newblock In \emph{International Conference on Learning Representations}, 2023.
\newblock URL \url{https://arxiv.org/abs/2210.02747}.

\bibitem[Liu et~al.(2022)Liu, Chen, So, and Theodorou]{liu2022deepgsb}
Guan-Horng Liu, Tianrong Chen, Oswin So, and Evangelos~A. Theodorou.
\newblock Deep generalized {Schr{\"o}dinger} bridge.
\newblock In \emph{Advances in Neural Information Processing Systems}, 2022.
\newblock URL \url{https://arxiv.org/abs/2209.09893}.

\bibitem[Liu et~al.(2024)Liu, Lipman, Nickel, Karrer, Theodorou, and Chen]{liu2023gsbm}
Guan-Horng Liu, Yaron Lipman, Maximilian Nickel, Brian Karrer, Evangelos~A. Theodorou, and Ricky T.~Q. Chen.
\newblock Generalized {Schr\"odinger} bridge matching.
\newblock In \emph{International Conference on Learning Representations}, 2024.
\newblock URL \url{https://arxiv.org/abs/2310.02233}.

\bibitem[Lu et~al.(2021)Lu, Jin, Pang, Zhang, and Karniadakis]{lu2021deeponet}
Lu~Lu, Pengzhan Jin, Guofei Pang, Zhongqiang Zhang, and George~Em Karniadakis.
\newblock Learning nonlinear operators via {DeepONet} based on the universal approximation theorem of operators.
\newblock \emph{Nature Machine Intelligence}, 3:\penalty0 218--229, 2021.
\newblock \doi{10.1038/s42256-021-00302-5}.

\bibitem[Luo et~al.(2025)Luo, O'Leary-Roseberry, Chen, and Ghattas]{luo2025optimization}
Dingcheng Luo, Thomas O'Leary-Roseberry, Peng Chen, and Omar Ghattas.
\newblock Efficient {PDE}-constrained optimization under high-dimensional uncertainty using derivative-informed neural operators.
\newblock \emph{SIAM Journal on Scientific Computing}, 2025.
\newblock \doi{10.1137/23M157956X}.

\bibitem[Noble et~al.(2026)Noble, Scheid, Janati, Moulines, and Durmus]{noble2026tsbm}
Maxence Noble, Marie Scheid, Yazid Janati, Eric Moulines, and Alain Durmus.
\newblock Twisted {Schr\"odinger} bridge matching.
\newblock \emph{arXiv preprint arXiv:2607.16987}, 2026.
\newblock URL \url{https://arxiv.org/abs/2607.16987}.

\bibitem[O'Leary-Roseberry et~al.(2024)O'Leary-Roseberry, Chen, Villa, and Ghattas]{oleary2024dino}
Thomas O'Leary-Roseberry, Peng Chen, Umberto Villa, and Omar Ghattas.
\newblock Derivative-informed neural operator: An efficient framework for high-dimensional parametric derivative learning.
\newblock \emph{Journal of Computational Physics}, 496:\penalty0 112555, 2024.
\newblock \doi{10.1016/j.jcp.2023.112555}.

\bibitem[Pavliotis(2014)]{pavliotis2014stochastic}
Grigorios~A. Pavliotis.
\newblock \emph{Stochastic Processes and Applications: Diffusion Processes, the Fokker--Planck and Langevin Equations}.
\newblock Springer, 2014.
\newblock \doi{10.1007/978-1-4939-1323-7}.

\bibitem[Pearlmutter(1994)]{pearlmutter1994fast}
Barak~A. Pearlmutter.
\newblock Fast exact multiplication by the hessian.
\newblock \emph{Neural Computation}, 6\penalty0 (1):\penalty0 147--160, 1994.
\newblock \doi{10.1162/neco.1994.6.1.147}.

\bibitem[Peyr{\'e} \& Cuturi(2019)Peyr{\'e} and Cuturi]{peyre2019computational}
Gabriel Peyr{\'e} and Marco Cuturi.
\newblock Computational optimal transport.
\newblock \emph{Foundations and Trends in Machine Learning}, 11\penalty0 (5--6):\penalty0 355--607, 2019.
\newblock \doi{10.1561/2200000073}.

\bibitem[Raissi et~al.(2019)Raissi, Perdikaris, and Karniadakis]{raissi2019physics}
Maziar Raissi, Paris Perdikaris, and George~E Karniadakis.
\newblock Physics-informed neural networks: A deep learning framework for solving forward and inverse problems involving nonlinear partial differential equations.
\newblock \emph{Journal of Computational Physics}, 378:\penalty0 686--707, 2019.
\newblock \doi{10.1016/j.jcp.2018.10.045}.

\bibitem[Shi et~al.(2022)Shi, De~Bortoli, Deligiannidis, and Doucet]{shi2022conditional}
Yuyang Shi, Valentin De~Bortoli, George Deligiannidis, and Arnaud Doucet.
\newblock Conditional simulation using diffusion {Schr{\"o}dinger} bridges.
\newblock In \emph{Proceedings of the Thirty-Eighth Conference on Uncertainty in Artificial Intelligence}, volume 180 of \emph{Proceedings of Machine Learning Research}, pp.\  1792--1802. PMLR, 2022.
\newblock URL \url{https://proceedings.mlr.press/v180/shi22a.html}.

\bibitem[Shi et~al.(2023)Shi, De~Bortoli, Campbell, and Doucet]{shi2023dsbm}
Yuyang Shi, Valentin De~Bortoli, Andrew Campbell, and Arnaud Doucet.
\newblock Diffusion {Schr\"odinger} bridge matching.
\newblock In \emph{Advances in Neural Information Processing Systems}, 2023.
\newblock URL \url{https://arxiv.org/abs/2303.16852}.

\bibitem[Song et~al.(2021)Song, Sohl-Dickstein, Kingma, Kumar, Ermon, and Poole]{song2021score}
Yang Song, Jascha Sohl-Dickstein, Diederik~P. Kingma, Abhishek Kumar, Stefano Ermon, and Ben Poole.
\newblock Score-based generative modeling through stochastic differential equations.
\newblock In \emph{International Conference on Learning Representations}, 2021.
\newblock URL \url{https://arxiv.org/abs/2011.13456}.

\bibitem[Srinivas \& Fleuret(2018)Srinivas and Fleuret]{srinivas2018jacobian}
Suraj Srinivas and Francois Fleuret.
\newblock Knowledge transfer with jacobian matching.
\newblock In \emph{Proceedings of the International Conference on Machine Learning}, volume~80 of \emph{Proceedings of Machine Learning Research}, 2018.
\newblock URL \url{https://proceedings.mlr.press/v80/srinivas18a.html}.

\bibitem[Takamoto et~al.(2022)Takamoto, Praditia, Leiteritz, MacKinlay, Alesiani, Pfl{\"u}ger, and Niepert]{takamoto2022pdebench}
Makoto Takamoto, Timothy Praditia, Raphael Leiteritz, Dan MacKinlay, Francesco Alesiani, Dirk Pfl{\"u}ger, and Mathias Niepert.
\newblock {PDEBench}: An extensive benchmark for scientific machine learning.
\newblock In \emph{Advances in Neural Information Processing Systems, Datasets and Benchmarks Track}, 2022.
\newblock URL \url{https://arxiv.org/abs/2210.07182}.

\bibitem[Tong et~al.(2024)Tong, Malkin, Fatras, Atanackovic, Zhang, Huguet, Wolf, and Bengio]{tong2024simulationfree}
Alexander Tong, Nikolay Malkin, Kilian Fatras, Lazar Atanackovic, Yanlei Zhang, Guillaume Huguet, Guy Wolf, and Yoshua Bengio.
\newblock Simulation-free {Schr{\"o}dinger} bridges via score and flow matching.
\newblock In \emph{Proceedings of the 27th International Conference on Artificial Intelligence and Statistics}, volume 238 of \emph{Proceedings of Machine Learning Research}, pp.\  1279--1287. PMLR, 2024.
\newblock URL \url{https://proceedings.mlr.press/v238/tong24a.html}.

\bibitem[Wang et~al.(2021)Wang, Jiao, Xu, Wang, and Yang]{wang2021deep}
Gefei Wang, Yuling Jiao, Qian Xu, Yang Wang, and Can Yang.
\newblock Deep generative learning via {Schr{\"o}dinger} bridge.
\newblock In \emph{Proceedings of the International Conference on Machine Learning}, volume 139 of \emph{Proceedings of Machine Learning Research}, pp.\  10794--10804, 2021.
\newblock URL \url{https://proceedings.mlr.press/v139/wang21l.html}.

\bibitem[Yao et~al.(2026)Yao, Luo, Cao, Kovachki, O'Leary-Roseberry, and Ghattas]{yao2025difno}
Boyuan Yao, Dingcheng Luo, Lianghao Cao, Nikola Kovachki, Thomas O'Leary-Roseberry, and Omar Ghattas.
\newblock Derivative-informed fourier neural operator: Universal approximation and applications to {PDE}-constrained optimization.
\newblock \emph{arXiv preprint arXiv:2512.14086}, 2026.
\newblock URL \url{https://arxiv.org/abs/2512.14086}.

\bibitem[Zhu et~al.(2025)Zhu, Li, Shi, Thompson, Nash, Lara~Rangel, Li, Chen, Zhu, Meng, and Ni]{zhu2025spdebench}
Yuantu Zhu, Zheyan Li, Dai Shi, Luke Thompson, Oliver Nash, Jose~Miguel Lara~Rangel, Siran Li, Bingguang Chen, Rongchan Zhu, Qi~Meng, and Hao Ni.
\newblock {SPDEBench}: An extensive benchmark for learning stochastic {PDEs}.
\newblock \emph{arXiv preprint arXiv:2505.18511}, 2025.
\newblock URL \url{https://arxiv.org/abs/2505.18511}.

\end{thebibliography}
\bibliographystyle{iclr2027_conference}
\clearpage
\appendix
\section{Proofs and additional analysis}
\label{app:proofs}

We give proofs of the main-text propositions and supporting
results for the response objectives.
Throughout, vector norms are Euclidean, matrix operator
norms are induced by the Euclidean norm, and the context
$c$ is fixed unless stated otherwise.

\subsection{Trajectory sensitivities and mean responses}
\label{app:regularity}

The methods section uses two differentiation steps:
differentiating a stochastic rollout at fixed randomness,
and interchanging differentiation with expectation.

\paragraph{Pathwise differentiation.}
Sufficient conditions include a drift
$b_\theta(x,u,t)$ that is continuously differentiable
in $(x,u)$, with uniformly bounded first derivatives
on the intervention neighborhood under consideration,
together with standard growth and time-regularity
conditions ensuring a unique solution.
Assume that the additive diffusion is independent of
$(x,u)$ and square-integrable in time, and that the
initial state is independent of $u$.

For a fixed Brownian path, the noise term does not change
when $u$ changes.
Differentiating the integral form of the state equation
therefore gives
\[
R_t^v
=
\int_0^t
\left[
\partial_x b_\theta(X_s,u,s)R_s^v
+
\partial_u b_\theta(X_s,u,s)v
\right]ds.
\]
This is the pathwise sensitivity
\eqref{eq:tangent}
\citep{glasserman1991gradient,li2020scalable}.

\paragraph{Interchanging derivative and expectation.}
Suppose
\[
\|\partial_x b_\theta\|_{\mathrm{op}}\leq L,
\qquad
\|\partial_u b_\theta\|_{\mathrm{op}}\leq M
\]
uniformly in state, time, and the intervention neighborhood.
For a unit $v$, Gr\"onwall's inequality yields
\begin{equation}
\|R_T^v\|_2\leq M\phi_L(T),
\qquad
\phi_L(T)=
\begin{cases}
(e^{LT}-1)/L, & L>0,\\
T, & L=0.
\end{cases}
\label{eq:app_tangent_domination}
\end{equation}
For the identity observable $h(X_T,u)=X_T$, this is
a deterministic bound on the output derivative.
Assuming the output is integrable, it permits
differentiation under expectation.

For a general observable, a sufficient condition is
almost-sure continuous differentiability of
$Y_\theta(c,u;W)$ in $u$ and an integrable random
variable $H(W)$ satisfying
\[
\sup_{|\epsilon|<\rho}
\left\|
\partial_u Y_\theta(c,u+\epsilon v;W)v
\right\|_2
\leq H(W)
\]
for some $\rho>0$.
Dominated convergence then gives
\begin{equation}
\partial_u m_\theta(c,u)v
=
\mathbb{E}
\left[
\partial_u Y_\theta(c,u;W)v
\mid c,u,v
\right]
=
\mathbb{E}[Z_\theta\mid c,u,v].
\label{eq:app_mean_response}
\end{equation}

For a differentiable Euler--Maruyama rollout, the discrete
tangent obeys the analogous recursion
\[
R_{n+1}
=
R_n+\Delta t\,
\left[
\partial_x b_\theta(X_n,u,t_n)R_n
+
\partial_u b_\theta(X_n,u,t_n)v
\right].
\]
The same bounds imply
$\|R_N\|_2\leq MT e^{LT}$ for $T=N\Delta t$.
This controls the derivative of the discrete computation;
identifying it with the continuous-SDE derivative
additionally requires numerical convergence.

The conditions above concern differentiation with respect
to the intervention.
Differentiating the training objective with respect to
$\theta$ requires the separate integrability conditions
stated below.

\subsection{Endpoint ambiguity}

\begin{proof}[Proof of Proposition~\ref{prop:nonid}]
Choose a vector $d\in\mathbb{R}^p$ such that the scalar
projections $d^\top u_i$ are all distinct.
Such a vector exists because the excluded choices lie
in finitely many proper hyperplanes
$d^\top(u_i-u_j)=0$.

Let
\[
q(u)=\prod_{i=1}^k(d^\top u-d^\top u_i),
\]
choose a nonzero output vector $w$, and let $Y(u)$
have a smooth conditional density and finite mean.
For example, $Y(u)$ may be a Gaussian random vector
whose distribution is independent of $u$.
Define
\[
\widetilde Y(u)=Y(u)+a q(u)w.
\]
At every $u_i$, $q(u_i)=0$, so the two outcomes have
identical conditional distributions.

Their mean Jacobians differ by
\[
a w\nabla q(u)^\top.
\]
At an observed intervention,
\[
\nabla q(u_i)
=
d\prod_{j\neq i}d^\top(u_i-u_j)\neq 0.
\]
Consequently, the Jacobian discrepancy at any chosen
$u_i$ becomes arbitrarily large as $|a|$ increases.
Translation preserves smoothness of the conditional
density and finiteness of the mean.
\end{proof}

This construction can also be realized by additive-noise
stochastic transports.
Start two processes from the same intervention-independent
state with identical noise.
Give one zero drift and the other the constant-in-time
drift $a q(u)w/T$.
Their terminal states then differ by exactly $a q(u)w$.

The construction does not impose a common upper bound on
derivatives across all choices of $a$.
Such additional restrictions can limit the ambiguity.
Moreover, replacing $q$ by $q^2$ preserves both the
endpoint distributions and their mean Jacobians at the
observed interventions, while changing behavior elsewhere.
Finite endpoint and sensitivity observations therefore
do not identify the entire response function.

\subsection{Response objectives and independent rollouts}
\label{app:meanloss}

All expectations in this subsection condition on a fixed
$(c,u,v)$.

\paragraph{Realization-level pairing.}
Fix $(c,u,v)$ and let $\pi$ be a $\theta$-independent
coupling of the random inputs $(W,\xi)$.
The responses $Z_\theta(W)$ and $Z^*(\xi)$ induce a joint
response distribution that may depend on $\theta$.
All expectations below are taken over the fixed
random-input coupling $\pi$.
Set
\[
\bar z_\theta=\mathbb{E}_\pi[Z_\theta],
\qquad
\bar z^*=\mathbb{E}_\pi[Z^*].
\]
Jensen's inequality gives
\[
\|\bar z_\theta-\bar z^*\|_2^2
=
\|\mathbb{E}_\pi[Z_\theta-Z^*]\|_2^2
\leq
\mathbb{E}_\pi\|Z_\theta-Z^*\|_2^2.
\]
This proves \eqref{eq:path_mean_bound}.

The full objective depends on the pairing.
Writing
$\Sigma_\theta=\operatorname{Cov}(Z_\theta)$,
$\Sigma_*=\operatorname{Cov}(Z^*)$, and
\[
C_\pi=
\mathbb{E}_\pi
\left[
(Z_\theta-\bar z_\theta)(Z^*-\bar z^*)^\top
\right],
\]
gives
\begin{align}
\mathbb{E}_\pi\|Z_\theta-Z^*\|_2^2
&=
\|\bar z_\theta-\bar z^*\|_2^2
+
\operatorname{tr}(\Sigma_\theta)
+
\operatorname{tr}(\Sigma_*)
-
2\operatorname{tr}(C_\pi).
\label{eq:app_pairing}
\end{align}
Under independent pairing, $C_\pi=0$.
The resulting loss therefore penalizes model response
variance as well as mean error.
A realization-level interpretation requires the pairing
to be specified; it cannot be inferred from endpoint
marginals alone.

\begin{proposition}[Conditional-mean response objective]
\label{prop:cross}
Let $Z_1$ and $Z_2$ be independent model response samples
with finite second moments and common mean
$\bar z_\theta$.
For a deterministic target $\bar z^*$,
\begin{align}
\mathbb{E}
\left[
\langle Z_1-\bar z^*,Z_2-\bar z^*\rangle
\right]
&=
\|\bar z_\theta-\bar z^*\|_2^2,
\label{eq:app_cross_identity}
\\
\mathbb{E}\|Z_1-\bar z^*\|_2^2
&=
\|\bar z_\theta-\bar z^*\|_2^2
+
\mathbb{E}\|Z_1-\bar z_\theta\|_2^2.
\label{eq:app_variance_identity}
\end{align}
If differentiation with respect to $\theta$ can be
interchanged with expectation, differentiating both
response factors in the cross-product loss gives an
unbiased gradient of
$\|\bar z_\theta-\bar z^*\|_2^2$.
\end{proposition}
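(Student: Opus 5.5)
The plan is to center both response factors at the common mean and expand. Write $A_i=Z_i-\bar z_\theta$ for $i=1,2$ and $e=\bar z_\theta-\bar z^*$, so that $Z_i-\bar z^*=A_i+e$. Finite second moments make every product below integrable, by Cauchy--Schwarz. Expanding the inner product gives
\[
\langle Z_1-\bar z^*,Z_2-\bar z^*\rangle
=\langle A_1,A_2\rangle+\langle A_1,e\rangle+\langle e,A_2\rangle+\|e\|_2^2 .
\]
Because $e$ is deterministic and $\mathbb{E}[A_i]=0$, the two cross terms have zero expectation. Independence together with integrability gives $\mathbb{E}\langle A_1,A_2\rangle=\langle\mathbb{E}A_1,\mathbb{E}A_2\rangle=0$; this can be checked coordinatewise as $\mathbb{E}[A_{1j}A_{2j}]=\mathbb{E}[A_{1j}]\,\mathbb{E}[A_{2j}]$. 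This proves \eqref{eq:app_cross_identity}.

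For \eqref{eq:app_variance_identity}, the same decomposition with $Z_1$ used in both slots gives
\[
\|Z_1-\bar z^*\|_2^2=\|A_1\|_2^2+2\langle A_1,e\rangle+\|e\|_2^2 .
\]
Taking expectations removes the middle term, which is the standard bias--variance split. This also recovers \eqref{eq:variance_penalty}.

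For the gradient claim, I would assume $Z_i=Z_\theta(W_i)$ with $W_1,W_2$ independent and not depending on $\theta$. This is exactly the reparameterized pathwise rollout, in which the Brownian path is fixed and $\theta$ enters only through the drift. Assuming differentiation may be interchanged with expectation, the product rule gives
\[
\nabla_\theta\,\mathbb{E}\langle Z_1-\bar z^*,Z_2-\bar z^*\rangle
=\mathbb{E}\bigl[(\partial_\theta Z_1)^\top(Z_2-\bar z^*)+(\partial_\theta Z_2)^\top(Z_1-\bar z^*)\bigr].
\]
By \eqref{eq:app_cross_identity}, the left side equals $\nabla_\theta\|\bar z_\theta-\bar z^*\|_2^2$ at every $\theta$. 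So the stochastic gradient with both factors differentiated is unbiased for that gradient.

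As a consistency check, independence lets each term factor as $\mathbb{E}[\partial_\theta Z_1]^\top e$. Each of these equals $(\partial_\theta\bar z_\theta)^\top e$, again by the interchange, and their sum is $2(\partial_\theta\bar z_\theta)^\top e$, as expected.

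The main subtlety is this last step. Unbiasedness needs the noise to be reparameterized independently of $\theta$, so that independence of $Z_1$ and $Z_2$ holds for all $\theta$ jointly with their derivatives, and it needs the interchange hypothesis, which the statement assumes. The paper's dominated-convergence conditions in Appendix~\ref{app:regularity}, adapted to $\theta$, would be one concrete way to verify that hypothesis.
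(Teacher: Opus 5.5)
Your proposal is correct and follows the paper's proof essentially step for step. You center each response at the common mean, use independence to kill $\mathbb{E}\langle Z_1-\bar z_\theta, Z_2-\bar z_\theta\rangle$, and for the gradient differentiate under the expectation with $\theta$-independent random inputs and target, so the identity holding at every $\theta$ carries over to its gradient. Your extra consistency check, which uses independence to factor each product-rule term as $(\partial_\theta\bar z_\theta)^\top(\bar z_\theta-\bar z^*)$, is not in the paper but is a correct confirmation.
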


\begin{proof}
Write $Z_i=\bar z_\theta+\xi_i$, with
$\mathbb{E}[\xi_i]=0$.
Independence gives
$\mathbb{E}\langle\xi_1,\xi_2\rangle=0$.
Expanding the cross product leaves
$\|\bar z_\theta-\bar z^*\|_2^2$.
Expanding the single-response squared loss instead
leaves the additional term
$\mathbb{E}\|\xi_1\|_2^2$.

For the gradient identity, assume that the sampled loss
is almost surely differentiable in $\theta$ and its
gradient has an integrable envelope in a neighborhood
of the parameter value.
Assume also that the random-input distribution and
target are independent of $\theta$.
Differentiating under expectation yields
\[
\mathbb{E}
\left[
\nabla_\theta
\langle Z_1-\bar z^*,Z_2-\bar z^*\rangle
\right]
=
\nabla_\theta
\|\bar z_\theta-\bar z^*\|_2^2.
\]
The derivative on the left includes both factors.
\end{proof}

This is an application of the independent-sampling
principle for squared expected residuals
\citep{baird1995residual}.
It removes the additional response-variance penalty
without specifying the correct response variance or
guaranteeing low estimator variance.

\subsection{Biased and noisy mechanistic targets}
\label{app:noisytarget}

Fix $(c,u,v)$.
Let a simulator-generated target $\widehat z^*$ satisfy
\[
\mathbb{E}[\widehat z^*]=\bar z^*+b,
\qquad
\operatorname{Cov}(\widehat z^*)=\Sigma_*,
\]
where $b$ is its bias relative to the physical
mean derivative.
Assume that the target is independent of the two
independent model response samples.

When the same target estimate is reused in both
residuals,
\begin{equation}
\mathbb{E}
\left[
\langle Z_1-\widehat z^*,Z_2-\widehat z^*\rangle
\right]
=
\|\bar z_\theta-\bar z^*-b\|_2^2
+
\operatorname{tr}(\Sigma_*).
\label{eq:app_noisy_target}
\end{equation}
To prove this, condition on $\widehat z^*$.
Independence of the model responses gives
$\|\bar z_\theta-\widehat z^*\|_2^2$.
Taking expectation over the target and applying the
variance decomposition gives
\eqref{eq:app_noisy_target}.

For a fixed target-generation mechanism, the covariance
term is independent of $\theta$ and does not change the
population gradient.
Bias does change the fitted response: the population
objective targets $\bar z^*+b$.

Using independent target estimates in the two residuals,
also independent of the model responses, removes the
covariance term.
It does not remove bias in those estimates.
A frozen target bank instead supervises its realized
values, whose discrepancy from $\bar z^*$ contributes
to $\varepsilon_{\mathrm{target}}$.

For example, a centered directional finite difference
\[
\frac{
m^*(u+h v)-m^*(u-h v)
}{2h}
\]
approximates $J^*(u)v$ with $O(h^2)$ truncation error
when the third directional derivative is bounded.
Finite Monte Carlo estimation and numerical simulation
can contribute additional error.
These errors belong to the target-approximation term;
they are not all represented by target covariance.

\subsection{Finite-intervention error}

\begin{proof}[Proof of Proposition~\ref{thm:finite}]
Set $e(u)=m_\theta(u)-m^*(u)$.
The fundamental theorem of calculus gives
\begin{align*}
D_\theta-D^*
&=
e(u+\delta)-e(u)\\
&=
r\int_0^1
[J_\theta(u+s\delta)-J^*(u+s\delta)]v\,ds.
\end{align*}
Applying Cauchy--Schwarz on the unit interval yields
\[
\|D_\theta-D^*\|_2^2
\leq
r^2\int_0^1
\|[J_\theta(u+s\delta)-J^*(u+s\delta)]v\|_2^2\,ds.
\]
\end{proof}

To obtain \eqref{eq:targetbound}, write
\begin{align*}
[J_\theta(u+s\delta)-J^*(u+s\delta)]v
&=
[J_\theta(u+s\delta)v-\widetilde z(s)]\\
&\quad+
[\widetilde z(s)-J^*(u+s\delta)v].
\end{align*}
The triangle inequality in
$L^2([0,1];\mathbb{R}^d)$ bounds the segment error by
$\varepsilon_{\mathrm{fit}}
+\varepsilon_{\mathrm{target}}$.
Taking square roots in \eqref{eq:finitebound}
then proves \eqref{eq:targetbound}.

\subsection{Direction and collocation coverage}
\label{app:coverage}

\paragraph{Random directions.}
For a fixed matrix $A$ and a random unit direction
satisfying $\mathbb{E}[vv^\top]=I/p$,
\[
\mathbb{E}_v\|Av\|_2^2
=
\operatorname{tr}
\left(A^\top A\,\mathbb{E}[vv^\top]\right)
=
\frac{\|A\|_F^2}{p}.
\]
For any unit $w$,
\[
\|Aw\|_2^2
\leq\|A\|_{\mathrm{op}}^2
\leq\|A\|_F^2
=
p\,\mathbb{E}_v\|Av\|_2^2.
\]
Taking $A=J_\theta(u)-J^*(u)$ shows that the
expected squared sensitivity error over isotropic
directions bounds the error in any unit direction
by a factor of $p$.

\paragraph{Finite collocation points.}
Let
\[
S=\{u+s\delta:s\in[0,1]\},
\qquad
E_v(x)=[J_\theta(x)-J^*(x)]v.
\]
Suppose a finite set $\mathcal C\subset S$ has fill
distance
\[
h=\sup_{x\in S}\min_{z\in\mathcal C}\|x-z\|_2,
\]
and assume
\[
\max_{z\in\mathcal C}\|E_v(z)\|_2\leq\varepsilon,
\qquad
\|E_v(x)-E_v(z)\|_2\leq L\|x-z\|_2
\quad (x,z\in S).
\]
For every $x\in S$, choose a nearest collocation
point $z$.
Then
\[
\|E_v(x)\|_2
\leq
\|E_v(z)\|_2+L\|x-z\|_2
\leq
\varepsilon+Lh.
\]
Proposition~\ref{thm:finite} therefore gives
\begin{equation}
\|D_\theta-D^*\|_2
\leq r(\varepsilon+Lh).
\label{eq:app_collocation_bound}
\end{equation}

The bound assumes a maximum response error at the
collocation points and a Lipschitz bound between them.

\paragraph{Accuracy at the base intervention.}
If only $\|E_v(u)\|_2\leq\varepsilon_0$ is available
and $E_v$ is $L$-Lipschitz on $S$, then
\[
\|E_v(u+s\delta)\|_2
\leq\varepsilon_0+Lrs.
\]
Integrating gives
\begin{equation}
\|D_\theta-D^*\|_2
\leq r\varepsilon_0+\frac{Lr^2}{2}.
\label{eq:app_local_bound}
\end{equation}

\subsection{Intervention selection and calibrated predictions}
\label{app:decision}

\begin{proof}[Proof of Proposition~\ref{cor:decision}]
The prediction-error bound, optimality of $\widehat a$,
and the triangle inequality give
\begin{align*}
\|g(\widehat a)-y\|_2
&\leq
\|\widehat g(\widehat a)-y\|_2+\eta\\
&\leq
\|\widehat g(a^*)-y\|_2+\eta\\
&\leq
\|g(a^*)-y\|_2+2\eta.
\end{align*}
Subtracting the oracle residual proves
\eqref{eq:tracking_regret}.
Squaring the last inequality and subtracting
$\|g(a^*)-y\|_2^2$ gives
\[
\ell(\widehat a)-\ell(a^*)
\leq
4\eta\|g(a^*)-y\|_2+4\eta^2,
\]
which proves \eqref{eq:regret}.
\end{proof}

Minimizing the residual norm or its square gives the
same set of minimizers.
The proposition allows ties and does not require
agreement with a particular oracle action.

\paragraph{Finite-change predictions.}
For an action displacement $\delta_a=r_a v_a$, set
\[
g(a)=m^*(u+\delta_a)-m^*(u),
\qquad
\widehat g(a)=m_\theta(u+\delta_a)-m_\theta(u).
\]
Applying \eqref{eq:targetbound} to each action gives
\[
\sup_{a\in\mathcal A}
\|\widehat g(a)-g(a)\|_2
\leq
\max_{a\in\mathcal A}
r_a\big(
\varepsilon_{\mathrm{fit},a}
+
\varepsilon_{\mathrm{target},a}
\big).
\]
This provides the stated connection between directional
response accuracy and intervention-selection error.

\paragraph{Linearized predictions.}
For the linear forecast $J_\theta(u)\delta_a$, assume
that $J^*(x)v_a$ is $L_{*,a}$-Lipschitz along the action
segment.
Then
\begin{equation}
\|J_\theta(u)\delta_a-g(a)\|_2
\leq
r_a\|[J_\theta(u)-J^*(u)]v_a\|_2
+
\frac{L_{*,a}r_a^2}{2}.
\label{eq:app_linear_bound}
\end{equation}
Indeed, adding and subtracting $J^*(u)\delta_a$
separates derivative estimation error from the
physical mean's Taylor remainder.

\paragraph{Calibrated linear predictions.}
For a scalar calibration coefficient $\alpha$, the
forecast is
\[
\widehat g_\alpha(a)=\alpha J_\theta(u)\delta_a.
\]
The same argument gives
\begin{equation}
\|\widehat g_\alpha(a)-g(a)\|_2
\leq
r_a\|[\alpha J_\theta(u)-J^*(u)]v_a\|_2
+
\frac{L_{*,a}r_a^2}{2}.
\label{eq:app_calibrated_bound}
\end{equation}
Thus, Proposition~\ref{cor:decision} applies with
\[
\eta=
\max_{a\in\mathcal A}
\left\{
r_a\|[\alpha J_\theta(u)-J^*(u)]v_a\|_2
+
\frac{L_{*,a}r_a^2}{2}
\right\}.
\]
This expression uses the error of the calibrated
derivative itself.

If desired, that derivative error can be decomposed as
\[
\|[\alpha J_\theta-J^*]v_a\|_2
\leq
\|[J_\theta-J^*]v_a\|_2
+
|\alpha-1|\,\|J_\theta v_a\|_2.
\]
This conservative decomposition separates the original
derivative error from the change introduced by calibration.
Any finite Monte Carlo error in the implemented forecast
must also be included in the action-prediction error.

\paragraph{Normalization.}
Let $d$ denote the output dimension and let $C>0$
be a normalization shared by all candidate actions
in a decision problem.
For the pixel-mean squared tracking loss
\[
L_{\mathrm{pixel}}(a)
=
\frac{\|g(a)-y\|_2^2}{d},
\]
define
\[
e_{\mathrm{track}}(a)
=
\sqrt{\frac{L_{\mathrm{pixel}}(a)}{C}}
=
\frac{\|g(a)-y\|_2}{\sqrt{dC}}.
\]
Equation~\eqref{eq:tracking_regret} therefore gives
\[
e_{\mathrm{track}}(\widehat a)
-
e_{\mathrm{track}}(a^*)
\leq
\frac{2\eta}{\sqrt{dC}}.
\]
If squared-loss regret uses the same normalization,
its bound follows by dividing
\eqref{eq:regret} by $dC$.

\section{Experimental protocol}
\label{app:protocol}

\subsection{Shared setup}

Each example consists of an initial condition,
an intervention value, and a physical outcome.
Endpoint training provides examples of the outcomes.
Response supervision additionally provides information
about how an outcome changes when the intervention
changes.

All three-seed comparisons use training seeds
32, 42, and 52.
We compute each metric separately for each trained
model, then report the mean and sample standard
deviation across seeds.


Conditional methods receive the intervention as an
input.
The unconditional DSBM reference omits this input,
so its predicted intervention response is zero.
Only the explicitly response-supervised variants
receive sensitivity targets.

The learned transport evolves over bridge time
$t\in[0,1]$.
This is distinct from the physical time used by
the simulator.
Tangents differentiate the implemented discrete
model rollout.
When evaluating a finite intervention, we use the
same model noise at the original and perturbed
parameter values, so their difference isolates
the effect of the intervention. In contrast, the two response
estimates in the conditional-mean training loss use
independent noise, as required by that objective.

\subsection{Dataset construction}

\paragraph{Conditional Gaussian.}
The coefficients in
Equation~\ref{eq:exp_gaussian} are
\[
A=
\begin{pmatrix}
0.80 & -0.30\\
0.25 & 0.65
\end{pmatrix},
\qquad
B_1=
\begin{pmatrix}
0.70\\-0.40
\end{pmatrix},
\qquad
B_2=
\begin{pmatrix}
0.25\\0.15
\end{pmatrix},
\qquad
B_3=
\begin{pmatrix}
0.55\\-0.45
\end{pmatrix}.
\]
The initial-state and additive endpoint-noise
covariances are
\[
\Sigma_0=
\begin{pmatrix}
1&0.35\\0.35&0.80
\end{pmatrix},
\qquad
\Sigma=
\begin{pmatrix}
0.25&0.08\\0.08&0.18
\end{pmatrix}.
\]

Endpoint training uses $u\in\{-1,0,1\}$.
Seen evaluation uses fresh examples at these
same parameter values.
ID interventions are sampled uniformly from
$[-1,1]$.
Near-OOD and Far-OOD use randomly signed
magnitudes drawn uniformly from $[1.15,1.50]$
and $[1.75,2.25]$, respectively.
Additional response locations are sampled
uniformly over $[-1.7,1.7]$.

The sensitivity target is the analytic derivative
\[
j^*(u)=B_1+2B_2u+\pi B_3\cos(\pi u).
\]
Because the additive noise is independent of $u$,
this derivative is the same for every noise
realization.
The response-supervision range includes Near-OOD
but does not cover Far-OOD.

\paragraph{Stochastic double well.}
We simulate the physical process from $\tau=0$
to $\tau=2$ using 200 Euler--Maruyama steps
of size $\Delta\tau=0.01$.
The noise coefficient is $0.45$, and the
intervention-dependent tilt is $0.30\tanh(u)$.

Endpoint anchors are $\{-1,0,1\}$.
ID interventions lie in $[-1,1]$;
Near-OOD uses $|u|\in[1.1,1.3]$;
Far-OOD uses $|u|\in[1.4,1.7]$.
Additional response locations cover $[-1.3,1.3]$.
Thus, Near-OOD tests generalization beyond
endpoint observations while remaining within
the response-supervision range.

For each simulator trajectory, the tangent starts
at zero and follows
\[
dR_\tau=
\left[
(1-3X_\tau^2)R_\tau+
0.30\,\operatorname{sech}^2(u)
\right]d\tau.
\]
Averaging its terminal value over independent
physical noise realizations gives the response
of the conditional mean.
This is the target used for mean-response
supervision.

\paragraph{PDEBench reaction--diffusion.}
The PDEBench simulator
\citep{takamoto2022pdebench} evolves two fields
on a $128\times128$ grid over $[-1,1]^2$.
The physical simulation runs from $\tau=0$
to $\tau=5$, with the solution recorded
at 101 time points.
The reference coefficients are
\[
D_1^0=10^{-3},
\qquad
D_2^0=5\times10^{-3},
\qquad
k^0=5\times10^{-3}.
\]
Intervention coordinates multiply these
coefficients by $2^{u_1}$, $2^{u_2}$, and
$2^{u_3}$, respectively.

Ten initial-condition seeds evaluated at the
27 anchors in $\{-1,0,1\}^3$ provide
270 endpoint-training pairs.
ID interventions lie in $[-1,1]^3$.
Near-OOD and Far-OOD use the respective shells
\[
1.1\leq\|u\|_\infty\leq1.3,
\qquad
1.4\leq\|u\|_\infty\leq1.7.
\]
Response supervision uses 108 conditions at
endpoint anchors and 216 additional conditions
over $[-1.3,1.3]^3$.

For each response condition, we draw
$z\sim\mathcal N(0,I_3)$ and set
$v=z/\|z\|_2$.
This gives an isotropic unit direction in
the three-dimensional intervention space.
The target is
\[
\widetilde j(c,u,v)=
\frac{
F(c,u+0.03v)-F(c,u-0.03v)
}{0.06},
\]
where $F$ denotes the terminal simulator field.
Both calls use the same initial condition.
The simulator is deterministic under these
fixed inputs; the resulting target describes
parameter dependence rather than physical
noise variability.

\paragraph{Stochastic Navier--Stokes.}
We generate the dataset using the SPDEBench solver
\citep{zhu2025spdebench}, with viscosity
$\nu(u)=10^{-4}2^u$ controlled by the intervention $u$.
We simulate from $\tau=0$ to $\tau=1$
with time step $\Delta\tau=10^{-3}$
and record the final vorticity field
on a $64\times64$ grid.
The fluid is driven by stochastic forcing, a random
input representing external disturbances, with
amplitude $0.005$.
The remaining SPDEBench solver settings are
\texttt{truncation}$=128$, $\alpha_Q=0.005$, and
$\kappa=10$, following the parameter definitions
in \citet{zhu2025spdebench}.

Initial fields are sampled around a shared
reference field using the Gaussian random-field
generator, with separate draws for training
and evaluation.
The model observes the initial field and
viscosity intervention, but not the realization
of the stochastic forcing.

Endpoint anchors are $\{-1,0,1\}$.
The held-out intervention sets are
\[
\begin{aligned}
\mathrm{ID}&:\{-0.75,-0.25,0.25,0.75\},\\
\mathrm{Near\mbox{-}OOD}&:\{-1.25,1.25\},\\
\mathrm{Far\mbox{-}OOD}&:\{-1.5,1.5\}.
\end{aligned}
\]
Additional response locations are sampled
uniformly over $[-1,1]$.

To construct a response target, the simulator
is evaluated at $u+0.03$ and $u-0.03$ using
the same random forcing within each pair.
The difference divided by $0.06$ is then
averaged over independent forcing realizations.
Finite-intervention evaluation uses a
log-viscosity displacement of $0.25$.
We estimate the simulator's sensitivity to the
intervention numerically and use these estimates
as response-supervision targets.

\subsection{Models and optimization}

\paragraph{Conditional and tangent bridges.}
For the Gaussian and double-well systems, the drift
network is a multilayer perceptron with three hidden
layers, each containing 128 units, and SiLU activations.
They use 30 Euler--Maruyama steps,
reference diffusion $0.5$,
learning rate $10^{-4}$,
bridge batch size 512, and
1,200 updates per matching stage.
Response updates use batch size 256.

Conditional and tangent variants share the
checkpoint after bridge-fitting iteration 3
and continue through iteration 7.
The sensitivity weights are $0.5$ for Gaussian
and $0.25$ for double well.

PDEBench uses a U-Net with base width 32,
20 rollout steps, reference diffusion $0.15$,
learning rate $2\times10^{-4}$, and bridge
batch size 8. Each standard matching stage
uses 800 gradient updates.

Tangent-SBM starts from the conditional baseline's
checkpoint after the first outer bridge-fitting
iteration and trains for one additional outer
iteration. These outer iterations each contain
multiple optimization updates.
During continuation, response supervision is applied
at every forward matching update, with sensitivity
weight $2$ and response batch size 4.

SNS uses the conditional field architecture
with reference diffusion $1.4$.
The retained conditional and tangent models
share iteration 3 and continue through
iteration 5.
Tangent-SBM uses sensitivity weight $1000$,
response batch size 8, one independent
rollout pair per response example, and
a response update every ten forward updates.

\paragraph{GSBM and TSBM adaptations.}
GSBM and TSBM receive intervention conditioning
and trajectory costs, but no sensitivity labels.
The costs express preferences over intermediate
states rather than directly supervising intervention
responses.

For the Gaussian system, both adaptations use
\[
V(x)=\frac{\beta}{2}\|x\|_2^2,
\qquad
\beta=\frac{0.1}{1.89778435}\approx0.052693,
\]
where the denominator is the training endpoint
second moment. TSBM uses the cost $\sigma^2 V$
under its diffusion-scaling convention.

For the double-well system, both use the physical
potential
\[
U(x,u)=\frac{x^4}{4}-\frac{x^2}{2}
       -0.30x\tanh(u),
\]
with weight $\beta=1$. Its negative derivative
with respect to $x$ gives the deterministic
component of the simulator drift.

For PDEBench and SNS, the cost encourages
spatial smoothness:
\[
V(x)=\frac{\beta}{2}
\sum_{(i,j)\in\mathcal E}\|x_i-x_j\|_2^2,
\qquad \beta=1,
\]
where $x$ is the normalized field and $\mathcal E$
connects neighboring grid locations.
PDEBench uses nonperiodic differences, while SNS
uses periodic differences.
TSBM applies its diffusion-dependent cost scaling.
This cost is a spatial smoothness prior, not a
residual of the governing equations.

For SNS and PDEBench, GSBM and TSBM are trained from scratch
for five forward/backward cycles, with 150
path-fitting updates and 800 drift-matching
updates per pass. Tangent-SBM also uses 800
matching updates per stage. 




\subsection{Metrics and evaluation}

Let $i$ index evaluation conditions and let
$d$ be the output dimension.
Write $\widehat m_i$ and $m_i^*$ for the
estimated model and physical mean outcomes.

\paragraph{Endpoint error.}
For Gaussian and double well, endpoint error
is root-mean-square error:
\[
E_{\mathrm{end}}=
\left(
\frac{1}{nd}
\sum_{i=1}^{n}
\|\widehat m_i-m_i^*\|_2^2
\right)^{1/2}.
\]
For the field benchmarks, we use mean
condition-wise relative error:
\[
E_{\mathrm{end}}=
\frac{1}{n}\sum_{i=1}^{n}
\frac{\|\widehat m_i-m_i^*\|_2}
{\max(\|m_i^*\|_2,\varepsilon)}.
\]
PDEBench computes this quantity in its
training-normalized field coordinates.
SNS restores physical field units before
evaluation.
The denominator floors are $10^{-8}$ for
PDEBench and $10^{-12}$ for SNS.

\paragraph{Sensitivity error.}
Sensitivity error measures how accurately the model
predicts the response to a small intervention.
At evaluation condition $i$, let $J_i^*$ denote
the reference Jacobian of the expected physical
outcome with respect to the intervention.
Let $\widehat J_i$ denote the corresponding model
estimate, computed by averaging derivatives
across model rollouts.

For Gaussian and double well, we combine errors
across conditions before normalizing:
\[
E_J=
\frac{
\left(\sum_i
\|\widehat J_i-J_i^*\|_F^2\right)^{1/2}
}{
\max\left\{
\left(\sum_i\|J_i^*\|_F^2\right)^{1/2},
\varepsilon
\right\}
}.
\]
The Frobenius norm $\|\cdot\|_F$ is the square
root of the sum of squared matrix entries.

For PDEBench and SNS, we measure the response
along an intervention direction $v_i$.
The reference and predicted responses are
$j_i^*=J_i^*v_i$ and
$\widehat j_i=\widehat J_i v_i$, respectively.
We normalize the error at each condition
and then average:
\[
E_J=
\frac{1}{n}\sum_{i=1}^{n}
\frac{\|\widehat j_i-j_i^*\|_2}
{\max\{\|j_i^*\|_2,\varepsilon\}},
\]
where $n$ is the number of evaluated
condition--direction pairs.
PDEBench uses responses in normalized field
coordinates, while SNS uses physical units.
The denominator floor prevents division by zero:
$\varepsilon=10^{-8}$ for Gaussian, double well,
and PDEBench, and $\varepsilon=10^{-12}$ for SNS.

Both definitions measure error relative to the
reference sensitivity. Because their aggregation
rules differ, we compare methods within each dataset.

\paragraph{Finite-intervention error.}
For displacement $\delta_i$, the predicted
change is
\[
\widehat D_i=
\widehat m(c_i,u_i+\delta_i)
-\widehat m(c_i,u_i).
\]
We compare this with the corresponding
physical mean change.
Errors use RMSE for the low-dimensional
systems and mean relative $\ell_2$ error
for the field systems.
The displacement magnitude is $0.25$ in
the reported intervention coordinate.
Unlike $E_J$, this metric evaluates the
effect of a nonzero parameter change.

\paragraph{Sampling budgets.}
Endpoint, sensitivity, and finite-change estimates
use 32/8/32 model samples for Gaussian,
32/16/32 for double well, 8/4/8 for PDEBench,
and 32/1,024/16 for the main SNS comparison.
For PDEBench, sensitivity and finite-change errors
are evaluated on the same eight selected conditions
per split for all methods.
For SNS, simulator reference values use 32
realizations for endpoint statistics and 16
paired simulations for sensitivities and
finite-intervention changes.

\subsection{Intervention-selection protocol}

We consider log-viscosity changes from
$\mathcal A=\{0,-0.125,0.125,-0.25,0.25\}$.
Base interventions use the four SNS ID values,
with four contexts at each value.
For each method and training seed, we estimate
a local response $\widehat j(c,u)$ from
16 model rollouts and predict an action's effect as
\[
\widehat g_\alpha(c,u,a)=\alpha a\,\widehat j(c,u).
\]
One nonnegative scale $\alpha$ is fitted per seed
on 16 separate development contexts:
\[
\alpha=
\max\left\{
0,
\frac{\sum_{i,a}
\langle a\widehat j_i,g_i^*(a)\rangle}
{\sum_{i,a}\|a\widehat j_i\|_2^2}
\right\},
\]
where $g_i^*(a)$ is the simulator-estimated mean
change and the sum uses $a=\pm0.25$.
These estimates use 64 paired simulator realizations.
We set $\alpha=0$ if the denominator vanishes
and freeze it before testing, without changing
model weights.

Evaluation uses 16 held-out contexts with two
desired changes each, generated at displacements
$\pm0.25$ using 32 paired simulator realizations.
Candidate-action outcomes are estimated independently
using 128 paired realizations.
Each of three training seeds is evaluated on
the same 32 tasks, giving 96 decisions.

For desired change $y$, define
\[
L(a)=\frac{1}{d}\|g^*(a)-y\|_2^2,
\qquad
N_y=\max\left\{\frac{1}{d}\|y\|_2^2,10^{-20}\right\},
\]
where $d$ is the number of field values.
The model selects
$\widehat a\in\arg\min_{a\in\mathcal A}
\|\widehat g_\alpha(c,u,a)-y\|_2^2$.
We evaluate its simulator-based tracking error
and normalized regret:
\[
E_{\mathrm{track}}
=\sqrt{\frac{L(\widehat a)}{N_y}},
\qquad
R=\frac{L(\widehat a)-\min_{a\in\mathcal A}L(a)}{N_y}.
\]
Tracking error measures distance from the goal;
regret measures additional squared loss relative
to the simulator-estimated best candidate action,
which defines the finite-grid oracle.

Oracle agreement is the fraction selecting
the oracle action.
Improvement is the fraction achieving lower loss
than no intervention, subject to numerical tolerance.
Exact ties follow the displayed action order,
placing zero first.
Metrics are averaged over tasks within each seed,
then summarized across seeds.
The 96 decisions reuse 32 tasks from 16 independent
contexts; the oracle uses the same candidate set
and finite-sample simulator estimates.

\paragraph{Qualitative figure.}
Figure~\ref{fig:sns} uses the first fixed
Seen case, with base intervention $u=-1$
and displacement $0.25$.
Predictions are averaged over three
training seeds.
Absolute error maps are computed per seed
and then averaged, preventing signed errors
from cancelling across seeds.
The figure illustrates one shared example,
rather than an average over all test contexts.

\section{Results across intervention regions}
\label{app:generalization}

Table~\ref{tab:generalization} extends the ID comparison
to Seen, Near-OOD, and Far-OOD.
These regions are defined relative to the parameter
values used for endpoint training.
Other evaluation settings follow
Appendix~\ref{app:protocol}.

\begin{table}[t]
\centering
\setlength{\tabcolsep}{3pt}
\caption{
Endpoint, sensitivity, and finite-change errors
across intervention regions.
Lower is better.
Entries are mean $\pm$ sample standard deviation
over three training seeds.
}
\label{tab:generalization}
\resizebox{\textwidth}{!}{%
\begin{tabular}{llccccccccc}
\toprule
& & \multicolumn{3}{c}{Seen}
  & \multicolumn{3}{c}{Near-OOD}
  & \multicolumn{3}{c}{Far-OOD}\\
\cmidrule(lr){3-5}
\cmidrule(lr){6-8}
\cmidrule(lr){9-11}
Dataset & Method
& Endpoint & $E_J$ & Finite change
& Endpoint & $E_J$ & Finite change
& Endpoint & $E_J$ & Finite change\\
\midrule
Gaussian
& DSBM
& $.559\pm.001$ & $1.000\pm.000$ & $.345\pm.000$
& $.511\pm.004$ & $1.000\pm.000$ & $.237\pm.000$
& $1.397\pm.006$ & $1.000\pm.000$ & $.512\pm.000$\\
& Conditional DSBM
& $.245\pm.008$ & $1.048\pm.007$ & $.357\pm.003$
& $.481\pm.036$ & $1.247\pm.034$ & $.249\pm.004$
& $.397\pm.028$ & $.664\pm.008$ & $.329\pm.004$\\
& GSBM
& $.264\pm.003$ & $1.051\pm.001$ & $.358\pm.001$
& $.505\pm.003$ & $1.274\pm.008$ & $.252\pm.002$
& $.462\pm.009$ & $.681\pm.008$ & $.339\pm.005$\\
& TSBM
& $.263\pm.002$ & $1.051\pm.002$ & $.358\pm.001$
& $.504\pm.005$ & $1.266\pm.006$ & $.250\pm.001$
& $.437\pm.022$ & $.672\pm.011$ & $.333\pm.006$\\
& \textbf{Tangent-SBM}
& $\mathbf{.244\pm.008}$ & $\mathbf{.035\pm.004}$ & $\mathbf{.017\pm.005}$
& $\mathbf{.245\pm.007}$ & $\mathbf{.129\pm.025}$ & $\mathbf{.021\pm.004}$
& $\mathbf{.260\pm.003}$ & $\mathbf{.137\pm.026}$ & $\mathbf{.085\pm.006}$\\
\midrule
Double well
& DSBM
& $.392\pm.004$ & $1.000\pm.000$ & $.092\pm.000$
& $.499\pm.002$ & $1.000\pm.000$ & $.042\pm.000$
& $.542\pm.005$ & $1.000\pm.000$ & $.023\pm.000$\\
& Conditional DSBM
& $.174\pm.007$ & $.458\pm.002$ & $.040\pm.001$
& $.168\pm.008$ & $1.205\pm.223$ & $.045\pm.007$
& $.167\pm.007$ & $2.549\pm.299$ & $.055\pm.006$\\
& GSBM
& $.177\pm.006$ & $.461\pm.035$ & $.040\pm.003$
& $.166\pm.004$ & $1.234\pm.170$ & $.047\pm.007$
& $.172\pm.013$ & $2.595\pm.413$ & $.056\pm.008$\\
& TSBM
& $.163\pm.006$ & $.473\pm.036$ & $.043\pm.003$
& $.160\pm.002$ & $1.342\pm.110$ & $.051\pm.005$
& $.173\pm.011$ & $2.922\pm.354$ & $.064\pm.007$\\
& Sobolev-DSBM
& $.180\pm.007$ & $.307\pm.047$ & $.027\pm.004$
& $.190\pm.011$ & $.274\pm.048$ & $.012\pm.001$
& $.203\pm.015$ & $1.193\pm.102$ & $.026\pm.003$\\
& \textbf{Tangent-SBM}
& $\mathbf{.168\pm.003}$ & $\mathbf{.271\pm.025}$ & $\mathbf{.022\pm.003}$
& $\mathbf{.166\pm.004}$ & $\mathbf{.303\pm.005}$ & $\mathbf{.013\pm.001}$
& $\mathbf{.166\pm.006}$ & $\mathbf{.984\pm.216}$ & $\mathbf{.022\pm.007}$\\
\midrule
PDEBench
& DSBM
& $.715\pm.029$ & $1.000\pm.000$ & $1.000\pm.000$
& $.740\pm.036$ & $1.000\pm.000$ & $1.000\pm.000$
& $.853\pm.032$ & $1.000\pm.000$ & $1.000\pm.000$\\
& Conditional DSBM
& $.648\pm.033$ & $.828\pm.019$ & $.833\pm.016$
& $.668\pm.020$ & $.882\pm.022$ & $.878\pm.022$
& $.729\pm.024$ & $.864\pm.065$ & $.867\pm.070$\\
& GSBM
& $.923\pm.024$ & $.801\pm.012$ & $.805\pm.010$
& $.917\pm.023$ & $.847\pm.011$ & $.845\pm.010$
& $.949\pm.027$ & $.888\pm.007$ & $.885\pm.007$\\
& TSBM
& $.906\pm.015$ & $.805\pm.036$ & $.808\pm.035$
& $.901\pm.012$ & $.847\pm.013$ & $.845\pm.012$
& $.928\pm.014$ & $.901\pm.060$ & $.900\pm.062$\\
& \textbf{Tangent-SBM}
& $\mathbf{.624\pm.072}$ & $\mathbf{.479\pm.048}$ & $\mathbf{.493\pm.049}$
& $\mathbf{.641\pm.062}$ & $\mathbf{.490\pm.048}$ & $\mathbf{.492\pm.045}$
& $\mathbf{.678\pm.066}$ & $\mathbf{.547\pm.044}$ & $\mathbf{.548\pm.045}$\\
\midrule
SNS
& DSBM
& $.301\pm.019$ & $1.000\pm.000$ & $1.000\pm.000$
& $.299\pm.016$ & $1.000\pm.000$ & $1.000\pm.000$
& $.297\pm.020$ & $1.000\pm.000$ & $1.000\pm.000$\\
& Conditional DSBM
& $.267\pm.048$ & $2.804\pm.326$ & $2.872\pm.182$
& $.266\pm.048$ & $4.754\pm.422$ & $4.051\pm.278$
& $.260\pm.047$ & $4.758\pm.339$ & $4.938\pm.257$\\
& GSBM
& $.231\pm.017$ & $3.614\pm.704$ & $3.637\pm.698$
& $.229\pm.018$ & $5.486\pm.604$ & $4.756\pm.622$
& $.230\pm.017$ & $5.295\pm.632$ & $5.498\pm.692$\\
& TSBM
& $.178\pm.003$ & $2.618\pm.022$ & $2.587\pm.031$
& $.179\pm.003$ & $4.351\pm.131$ & $3.654\pm.123$
& $.177\pm.002$ & $4.394\pm.181$ & $4.448\pm.230$\\
& Sobolev-DSBM
& $.254\pm.036$ & $2.512\pm1.048$ & $2.520\pm1.048$
& $.253\pm.035$ & $3.729\pm1.625$ & $3.092\pm1.311$
& $.247\pm.034$ & $3.556\pm1.547$ & $3.565\pm1.553$\\
& \textbf{Tangent-SBM}
& $\mathbf{.266\pm.026}$ & $\mathbf{.941\pm.072}$ & $\mathbf{1.393\pm.307}$
& $\mathbf{.265\pm.027}$ & $\mathbf{1.451\pm.146}$ & $\mathbf{1.703\pm.221}$
& $\mathbf{.259\pm.027}$ & $\mathbf{1.443\pm.204}$ & $\mathbf{1.992\pm.181}$\\
\bottomrule
\end{tabular}%
}
\end{table}

\section{Distributional evaluation}
\label{app:distributional}

A model can predict the average outcome accurately
while misrepresenting the variability of individual
outcomes. We therefore evaluate both the spread
of generated samples and their agreement with
the simulator's outcome distribution.

We examine whether adding response supervision
preserves the model's ability to reproduce
outcome variability.
Tables~\ref{tab:distribution_lowdim}
and~\ref{tab:distribution_sns} show that
Tangent-SBM retains distributional accuracy
comparable to Conditional DSBM across the
three stochastic systems.

PDEBench is deterministic given the initial
condition and intervention. Its prediction errors
are reported in Table~\ref{tab:id}; the diagnostics
below focus on the three systems with stochastic
physical outcomes.

\paragraph{Gaussian and double well.}
Table~\ref{tab:distribution_lowdim} reports
distributional errors for the two low-dimensional
systems.
For Gaussian, we compare the predicted covariance
$\widehat\Sigma$ with the reference covariance
$\Sigma$ using
\[
\frac{\|\widehat\Sigma-\Sigma\|_F}{\|\Sigma\|_F}.
\]
Covariance describes the variability of each output
component and how components vary together.
We estimate it by subtracting the estimated
conditional mean from each sampled outcome
and pooling the resulting residuals.

For double well, the final state can occupy
either of two wells.
We measure how accurately the model predicts
the probability of ending in the right well,
$\Pr(X_T>0)$, using root mean squared error
across evaluation conditions.

\begin{table}[t]
\centering
\small
\caption{ID distributional errors for Gaussian
and double well. Lower is better.
Entries are mean $\pm$ sample standard deviation
over three training seeds.}
\label{tab:distribution_lowdim}
\begin{tabular}{lcc}
\toprule
Method & Gaussian covariance error
       & Double-well probability RMSE\\
\midrule
DSBM
& $0.485\pm0.009$ & $0.170\pm0.001$\\
Conditional DSBM
& $0.330\pm0.009$ & $0.108\pm0.003$\\
GSBM
& $0.346\pm0.004$ & $0.107\pm0.001$\\
TSBM
& $0.345\pm0.003$ & $0.106\pm0.003$\\
Tangent-SBM
& $0.329\pm0.011$ & $0.105\pm0.001$\\
\bottomrule
\end{tabular}
\end{table}

Tangent-SBM gives similar distributional errors
to Conditional DSBM on both systems.
Its response improvements therefore occur
without a substantial change in these measures
of outcome variability.

\paragraph{Stochastic Navier--Stokes.}
Table~\ref{tab:distribution_sns} reports four
complementary measurements.
Here $d$ is the number of grid values in
a vorticity field.

\emph{Spread error} measures whether the model
reproduces the variability at each grid location.
We compute the standard deviation across sampled
fields at each location, then compare the model
and simulator standard-deviation fields using
relative $\ell_2$ error.

\emph{Energy distance (ED)} measures distributional
disagreement using distances between model and
simulator samples, subtracting the corresponding
within-distribution distances.
We divide its empirical value by $\sqrt d$
to account for field dimension.

\emph{Sliced Wasserstein distance (SWD)} projects
the sampled fields onto 32 unit directions.
For each direction, it compares the sorted
model and simulator projections, then averages
these differences across directions.

\emph{Enstrophy distance} compares a physical
summary of each field.
For vorticity $\omega$, enstrophy is
$\|\omega\|_2^2/(2d)$, which measures the
average squared vorticity.
We compare the model and simulator enstrophy
distributions using one-dimensional Wasserstein
distance, denoted $W_1$.

Each metric is computed separately at each
evaluation condition and then averaged.

\begin{table}[t]
\centering
\small
\setlength{\tabcolsep}{4pt}
\caption{SNS ID distributional errors.
ED is energy distance divided by $\sqrt d$;
SWD is sliced Wasserstein distance.
Enstrophy $W_1$ is displayed in units of $10^{-3}$.
Lower is better.
Entries are mean $\pm$ sample standard deviation
over three training seeds.}
\label{tab:distribution_sns}
\begin{tabular}{lcccc}
\toprule
Method & Spread error & ED & SWD
& Enstrophy $W_1$\\
\midrule
DSBM
& $.207\pm.017$ & $.0207\pm.0028$
& $.0345\pm.0019$ & $2.222\pm.439$\\
Conditional DSBM
& $.188\pm.008$ & $.0170\pm.0048$
& $.0319\pm.0045$ & $1.654\pm.123$\\
GSBM
& $.189\pm.002$ & $.0130\pm.0014$
& $.0288\pm.0014$ & $2.613\pm.806$\\
TSBM
& $.181\pm.001$ & $.0092\pm.0001$
& $.0239\pm.0005$ & $1.322\pm.113$\\
Tangent-SBM
& $.183\pm.003$ & $.0166\pm.0027$
& $.0317\pm.0025$ & $1.673\pm.490$\\
\bottomrule
\end{tabular}
\end{table}

Together, these results show that Tangent-SBM's
response improvements are accompanied by
distributional errors similar to those of
Conditional DSBM.
TSBM achieves lower errors, consistent with
its stronger endpoint fit.
Tangent-SBM's benefit is improved intervention-response
accuracy while retaining comparable distributional
fit to the conditional baseline.

\section{Response objective and rollout construction}
\label{app:objective_ablations}

\paragraph{How should an average response be supervised?}
A conditional-mean target specifies an average
over physical noise realizations.
The single-rollout control trains each
individual model response to match that average.
The two-rollout objective instead matches
the average model response without adding
the response-variance penalty identified
in Proposition~\ref{prop:cross}.
Table~\ref{tab:mean_objective} compares these
objectives on the double-well system.

\begin{table}[t]
\centering
\small
\caption{Double-well ID mean-target ablation.
Entries are mean $\pm$ sample standard deviation
over three training seeds.
The variance ratio compares conditional model
response variance with conditional simulator
response variance; it is a diagnostic rather
than a training target.}
\label{tab:mean_objective}
\begin{tabular}{lcc}
\toprule
Objective & Mean-response RMSE & Variance ratio\\
\midrule
Single-rollout squared loss
& $0.104\pm0.025$ & $0.312\pm0.055$\\
Two-rollout mean loss
& $0.066\pm0.014$ & $0.665\pm0.102$\\
\bottomrule
\end{tabular}
\end{table}

Table~\ref{tab:mean_objective} shows that the
two-rollout objective gives lower mean-response
RMSE and greater variation across trajectory
responses, consistent with removing the additional
variance penalty.
The variance ratio does not establish recovery
of physical trajectory-level responses, since
model and simulator trajectories need not have
an identified pairing.
This diagnostic reports absolute mean-response
RMSE rather than the main table's relative $E_J$.

\paragraph{Where should the response be computed?}
At prediction time, the model generates an outcome
from an initial condition without being given
the final state. We call this a free rollout.
Our response supervision measures how this
predicted outcome changes when the intervention
changes.

An alternative computes the response along a
training trajectory constrained to reach a
specified final state.
That constraint can affect the measured response,
so it may not reflect how the model responds
when predicting an endpoint on its own.

Table~\ref{tab:rollout_objective} compares these
training choices.
At evaluation, every model generates its own
endpoint without a prescribed final state,
including the model trained with
endpoint-constrained supervision.
The SNS rows also compare the single- and
two-rollout training losses.

\begin{table}[t]
\centering
\small
\caption{ID response-objective and rollout controls.
The variants describe training choices; all models
are evaluated using free rollouts.
Entries are mean $\pm$ sample standard deviation
over three training seeds. Lower errors are better.}
\label{tab:rollout_objective}
\begin{tabular}{llcc}
\toprule
System & Training variant & Endpoint error & $E_J$\\
\midrule
Double well & Free, two-rollout mean loss
& $0.136\pm0.001$ & $0.197\pm0.031$\\
& Free, single-rollout squared loss
& $0.145\pm0.006$ & $0.272\pm0.051$\\
& Endpoint-constrained
& $0.177\pm0.002$ & $0.252\pm0.020$\\
\midrule
SNS & Free, two-rollout mean loss
& $0.265\pm0.028$ & $1.002\pm0.075$\\
& Free, single-rollout squared loss
& $0.253\pm0.034$ & $2.624\pm1.103$\\
& Endpoint-constrained
& $0.252\pm0.043$ & $1.914\pm0.646$\\
\bottomrule
\end{tabular}
\end{table}

On SNS, Table~\ref{tab:rollout_objective} shows
that free-rollout supervision with the two-rollout
mean loss gives the lowest mean response error,
although the controls have slightly lower
mean endpoint errors.
For double well, free and endpoint-constrained
supervision give similar results.
These comparisons support the chosen SNS
training construction while showing that its
effect depends on the system.

\section{Target correctness and response coverage}
\label{app:target_coverage}

\paragraph{Does the supplied response information matter?}
We test whether the gains depend on correct
sensitivity information, rather than simply
adding another training loss.
The shuffled-target control assigns response
labels to the wrong training conditions.
The sign-reversed control multiplies each
response label by $-1$, asking the model to
predict the opposite response.
Both controls retain response supervision
but change the information it supplies.

Table~\ref{tab:target_correctness} compares
these controls with correct supervision.

\begin{table}[t]
\centering
\small
\caption{ID sensitivity error $E_J$ with correct,
shuffled, and sign-reversed targets.
Lower is better. Entries are mean $\pm$ sample
standard deviation over three training seeds.}
\label{tab:target_correctness}
\begin{tabular}{lccc}
\toprule
System & Correct & Shuffled & Sign-reversed\\
\midrule
Gaussian
& $.050\pm.005$ & $.858\pm.007$ & $1.817\pm.008$\\
Double well
& $.197\pm.031$ & $.357\pm.026$ & $1.027\pm.043$\\
PDEBench
& $.508\pm.045$ & $1.010\pm.008$ & $1.647\pm.060$\\
SNS
& $1.002\pm0.075$ & $1.925\pm.822$ & $2.077\pm.077$\\
\bottomrule
\end{tabular}
\end{table}

Correct targets give the lowest mean error
in every system in
Table~\ref{tab:target_correctness}.
These results support the importance of
the supplied response information:
adding a derivative loss with incorrect
targets does not reproduce the same gains.

\paragraph{How much response coverage is needed?}
Response labels can be supplied at additional
intervention settings beyond those used for
endpoint training.
We retain 25\%, 50\%, or 100\% of this additional
response-label pool to examine how coverage
affects accuracy.
For SNS, labels at the endpoint-training
settings are retained in every variant;
only the additional conditions are subsampled.

Table~\ref{tab:response_coverage} reports
the resulting sensitivity errors.
Here 100\% means the full available label
pool, not every possible intervention.

\begin{table}[t]
\centering
\small
\caption{ID sensitivity error $E_J$ when retaining
different fractions of the additional response-label
pool. Lower is better.
Entries are mean $\pm$ sample standard deviation
over three training seeds.
}
\label{tab:response_coverage}
\begin{tabular}{lccc}
\toprule
System & 25\% & 50\% & 100\%\\
\midrule
Gaussian
& $.050\pm.007$ & $.050\pm.006$ & $.050\pm.005$\\
Double well
& $.197\pm.042$ & $.198\pm.022$ & $.197\pm.031$\\
PDEBench
& $.516\pm.031$ & $.509\pm.015$ & $.508\pm.045$\\
SNS
& $2.075\pm.828$ & $1.726\pm.748$ & $1.002\pm0.075$\\
\bottomrule
\end{tabular}
\end{table}

Gaussian, double well, and PDEBench show little
change in response error across the tested
coverage levels in Table~\ref{tab:response_coverage}, suggesting that 25\% of the
additional supervision locations suffice to
maintain similar accuracy in these settings.
SNS benefits more from broader coverage; its mean sensitivity error decreases as
more response conditions are included.



\paragraph{Response direction and magnitude.}
A predicted response can have the wrong
spatial pattern, the wrong size, or both.
For SNS, we distinguish these effects using
cosine alignment and a response magnitude ratio:
\[
\operatorname{cos}(\widehat j,j^*)=
\frac{\langle\widehat j,j^*\rangle}
{\|\widehat j\|_2\|j^*\|_2},
\qquad
\rho=
\frac{\|\widehat j\|_2}{\|j^*\|_2},
\]
where $\widehat j$ is the predicted response
field and $j^*$ is the reference response field.
Denominators are floored to avoid division by zero.

Cosine alignment is $1$ for responses pointing
in the same direction, $0$ for orthogonal
responses, and $-1$ for opposite responses.
The magnitude ratio $\rho$ compares their sizes:
a value of one indicates equal magnitudes,
values above one indicate overestimation,
and values below one indicate underestimation.
For example, $\rho=0.6$ means the predicted
response has 60\% of the reference magnitude.
Matching magnitude alone does not ensure
the correct response direction.

We compute both metrics at each condition,
average across conditions, and then summarize
across training seeds.

\begin{table}[t]
\centering
\small
\caption{SNS ID response alignment and magnitude.
Higher cosine indicates better alignment;
a magnitude ratio closer to one indicates
a better match in response size.
Entries are mean $\pm$ sample standard deviation
over three training seeds.
}
\label{tab:response_geometry}
\begin{tabular}{lcc}
\toprule
Variant & Cosine alignment & Magnitude ratio\\
\midrule
Conditional DSBM
& $.121\pm.082$ & $2.547\pm.310$\\
TSBM
& $.058\pm.102$ & $2.253\pm.166$\\
Tangent-SBM, correct targets
& $.592\pm.040$ & $1.140\pm.142$\\
Tangent-SBM, shuffled targets
& $.229\pm.125$ & $1.794\pm.817$\\
Tangent-SBM, sign-reversed targets
& $-.336\pm.103$ & $1.518\pm.152$\\
\bottomrule
\end{tabular}
\end{table}

Table~\ref{tab:response_geometry} shows that
correct supervision improves alignment and
reduces magnitude overestimation relative
to Conditional DSBM and TSBM.
The sign-reversed model predicts a response
of similar size to the correctly supervised
model, but its negative cosine indicates
opposition to the reference response.
Thus, the benefit of correct targets is
not explained solely by predicting smaller
changes; their direction also matters.

\section{Simulator-budget controls}
\label{app:budget_controls}

Generating sensitivity labels requires additional
simulator runs. Could the same number of runs
improve response accuracy simply by providing
more endpoint-training examples?
We compare Tangent-SBM with two endpoint-only
controls: LocalBudget collects outcomes near
the response-supervision settings, while
SpreadBudget distributes the same simulation
budget over a broader intervention range.
Neither control uses sensitivity labels.
Table~\ref{tab:budget_comparison} reports
the resulting prediction and response errors.

\paragraph{PDEBench allocation.}
The response-label set contains 108 anchor
conditions, at intervention settings used for
endpoint training, and 216 additional conditions.
Each sensitivity label requires two simulator
runs at slightly perturbed intervention values.
The additional budget is therefore
\[
2(108+216)=648
\]
runs beyond the original 270 endpoint-training
examples.

LocalBudget adds these same perturbation outputs
to the training set as ordinary endpoint examples,
rather than using their differences as sensitivity
targets.
SpreadBudget instead uses 648 runs with fresh
initial conditions from the same generator
and interventions sampled uniformly over
$[-1.3,1.3]^3$.
Each control therefore has 918 endpoint examples.

We test two training budgets for SpreadBudget.
SpreadBudget-800 uses 800 continuation updates.
SpreadBudget-2720 increases this to 2,720 updates,
approximately preserving the number of updates
per training example:
\[
800\frac{918}{270}=2720.
\]
Table~\ref{tab:budget_comparison} includes both
variants to distinguish the effect of additional
data from that of additional optimization.

\paragraph{SNS allocation.}
Because SNS includes physical randomness,
each sensitivity label is estimated using
multiple pairs of simulator runs.
Let $n_a$ and $n_c$ be the numbers of anchor
and additional response conditions, and let
$K$ be the number of simulation pairs per label.
The additional simulation budget is
\[
B_{\mathrm{sim}}=2K(n_a+n_c),
\]
with these counts taken from the training-data
metadata.

LocalBudget generates endpoint examples at
the same positive and negative intervention
perturbations used for response supervision.
Each pair shares the same random forcing.
It matches the number and locations of the
response-label simulations, while using
new noise realizations.

SpreadBudget uses the same number of simulator
runs with fresh initial conditions and interventions
sampled uniformly over $[-1,1]$.
Both controls start from the same conditional
model checkpoint after outer bridge-fitting
iteration 3 and continue through iteration 5,
using the original normalization.

\begin{table}[t]
\centering
\small
\caption{ID simulator-budget comparisons.
LocalBudget and SpreadBudget use additional
simulations as endpoint examples; Tangent-SBM
uses them for sensitivity supervision.
Lower errors are better.
Entries are mean $\pm$ sample standard deviation
over three training seeds.
}
\label{tab:budget_comparison}
\begin{tabular}{llccc}
\toprule
System & Variant & Endpoint & $E_J$ & Finite change\\
\midrule
PDEBench & LocalBudget
& $.636\pm.071$ & $.849\pm.094$ & $.846\pm.094$\\
& SpreadBudget-800
& $.646\pm.039$ & $.875\pm.032$ & $.872\pm.036$\\
& SpreadBudget-2720
& $.661\pm.063$ & $.855\pm.109$ & $.855\pm.111$\\
& Tangent-SBM
& $.598\pm.069$ & $.508\pm.045$ & $.510\pm.046$\\
\midrule
SNS & LocalBudget
& $.272\pm.005$ & $2.929\pm.403$ & $2.517\pm.307$\\
& SpreadBudget
& $.283\pm.011$ & $3.649\pm.332$ & $3.145\pm.281$\\
& Tangent-SBM
& $.265\pm.028$ & $1.002\pm0.075$ & $1.291\pm.354$\\
\bottomrule
\end{tabular}
\end{table}

Table~\ref{tab:budget_comparison} shows that
Tangent-SBM achieves lower mean sensitivity
and finite-change errors than the tested
endpoint-only allocations on both systems.
Increasing SpreadBudget's training updates
on PDEBench does not close this gap.
These results support using the additional
simulations for explicit response supervision,
rather than solely for more endpoint examples
under the tested allocations.
The controls match simulator-run counts,
not total training time.

\section{Response supervision beyond bridges}
\label{app:alternative_predictors}

We examine whether response supervision benefits
predictors beyond stochastic bridges.
Direct terminal predictors map an initial condition
and intervention to an outcome without integrating
a learned bridge.
Deterministic field operators similarly predict
an output field directly.
For each deterministic predictor, we compare
endpoint-only training with training that
additionally uses sensitivity targets.

We also evaluate conditional flow matching,
a stochastic generative model conditioned on
the intervention, without response supervision.
This provides an additional model comparison,
rather than a paired test of response supervision.
Table~\ref{tab:alternative_predictors} reports
the results.

\begin{table}[t]
\centering
\small
\setlength{\tabcolsep}{4pt}
\caption{ID prediction errors beyond bridge models.
``Response'' denotes training with endpoint
examples and mechanistic response targets.
Lower is better.
Entries are mean $\pm$ sample standard deviation
over three training seeds.
Metrics follow Appendix~\ref{app:protocol}.}
\label{tab:alternative_predictors}
\begin{tabular}{llccc}
\toprule
System & Predictor & Endpoint & $E_J$ & Finite change\\
\midrule
Gaussian & Terminal, endpoint-only
& $.361\pm.001$ & $.865\pm.002$ & $.276\pm.001$\\
& Terminal, response
& $.023\pm.002$ & $.017\pm.001$ & $.005\pm.000$\\
\midrule
Double well & Terminal, endpoint-only
& $.125\pm.003$ & $.268\pm.008$ & $.031\pm.001$\\
& Terminal, response
& $.120\pm.001$ & $.170\pm.005$ & $.018\pm.001$\\
\midrule
PDEBench & Terminal, endpoint-only
& $.159\pm.003$ & $.416\pm.032$ & $.406\pm.017$\\
& Terminal, response
& $.130\pm.003$ & $.204\pm.011$ & $.208\pm.010$\\
& Operator, endpoint-only
& $.152\pm.002$ & $.385\pm.015$ & $.372\pm.017$\\
& Operator, response
& $.133\pm.004$ & $.260\pm.009$ & $.265\pm.009$\\
\midrule
SNS & Mean operator, endpoint-only
& $.182\pm.020$ & $70.083\pm20.495$ & $51.280\pm12.378$\\
& Mean operator, response
& $.136\pm.009$ & $3.675\pm1.666$ & $3.319\pm1.384$\\
& Conditional flow matching
& $.176\pm.028$ & $8.570\pm5.015$ & $7.568\pm4.499$\\
\bottomrule
\end{tabular}
\end{table}

\paragraph{Comparison setup.}
The PDEBench terminal variants continue from
the same initial checkpoint.
The separate PDEBench operator comparison
uses 1,600 training updates per variant.
Deterministic predictions require no averaging
over model noise; stochastic mean predictions
use the sampling budgets specified in
Appendix~\ref{app:protocol}.

\paragraph{Benefits across predictor families.}
Table~\ref{tab:alternative_predictors} shows
lower mean endpoint, sensitivity, and finite-change
errors with response supervision in every tested
deterministic predictor pair.
For example, sensitivity error decreases from
$0.865$ to $0.017$ for the Gaussian terminal
predictor and from $0.416$ to $0.204$ for
the PDEBench terminal predictor.
The SNS operator also improves substantially,
although its response errors remain above
the zero-response reference.
These comparisons show that the benefits of
mechanistic response information extend beyond
bridge learning.

\paragraph{Relation to stochastic transport.}
Direct response-supervised predictors are strong
alternatives when only mean outcomes are needed.
Several achieve lower errors than Tangent-SBM
on these mean-based metrics, particularly on
deterministic PDEBench.
That benchmark tests response learning for
high-dimensional fields rather than the need
for stochastic outcome modeling.

Tangent-SBM incorporates response information
while retaining a sampler of outcome distributions.
Its response gains over Conditional DSBM coexist
with comparable distributional fit
(Tables~\ref{tab:distribution_lowdim}
and~\ref{tab:distribution_sns}).
On SNS, its reported sensitivity and finite-change
errors are also lower than those of the tested
direct operators and conditional flow-matching
model, despite their lower endpoint errors.
The comparisons therefore distinguish accurate
mean prediction, accurate intervention responses,
and modeling outcome variability.

\section{Additional supervision studies}
\label{app:additional_ablations}

\paragraph{Forward and backward response supervision.}
The learned bridge includes a forward transport
from initial to terminal states and a backward
transport in the reverse direction.
A model that responds accurately to interventions
in one direction need not do so in the other.

For the Gaussian system, the reference conditional
distribution in the reverse direction is available
analytically, so we can also compute its intervention
sensitivity.
Table~\ref{tab:directionality} compares training
with forward-response targets, backward-response
targets, or both.
These directions refer to the learned transport,
not to reversing the physical simulator's time.

\begin{table}[t]
\centering
\small
\caption{Gaussian ID forward/backward supervision
study. Lower errors are better.
Entries are mean $\pm$ sample standard deviation
over three training seeds.}
\label{tab:directionality}
\begin{tabular}{lccc}
\toprule
Supervision & Endpoint RMSE
& Forward $E_J$ & Backward $E_J$\\
\midrule
Forward only
& $.238\pm.008$ & $.050\pm.005$ & $.905\pm.004$\\
Backward only
& $.427\pm.010$ & $.869\pm.001$ & $.101\pm.015$\\
Both directions
& $.219\pm.009$ & $.066\pm.013$ & $.108\pm.017$\\
\bottomrule
\end{tabular}
\end{table}

Table~\ref{tab:directionality} shows that
supervising one direction mainly improves
the response in that direction.
Forward-only supervision gives low forward
error but leaves a large backward error;
backward-only supervision shows the opposite
pattern.
Supervising both directions gives low response
errors in both and the lowest mean endpoint
RMSE in this study.
Forward-only training retains the lowest
mean forward response error.

These results show that response supervision
can be applied to either transport direction,
and that accuracy in the unsupervised direction
does not follow automatically.

\end{document}